\newcommand{\R}{\mathbb{R}}
\newcommand{\X}{{\cal X}}
\newcommand{\Y}{\cal Y}
\newcommand{\XY}{\X\times\Y}
\newcommand{\ie}{i.e.}
\newtheorem{lemma}{Lemma}
\newtheorem{op}{Open Problem}
\newtheorem*{lem}{Lemma}
\newtheorem{theorem}{Theorem}
\newtheorem{corollary}{Corollary}
\newtheorem{remark}{Remark}
\newenvironment{proof_idea}{%
	\proof}{\endproof}
\newcommand{\D}{{\cal D}}
\tikzstyle{arrow} = [draw, -latex']
\tikzstyle{none}=[inner sep=0pt]
\definecolor{hexcolor0xa0fdaa}{rgb}{0.627,0.992,0.667}
\definecolor{hexcolor0xa7fab7}{rgb}{0.655,0.980,0.718}
\definecolor{hexcolor0xefb4b4}{rgb}{0.937,0.706,0.706}
\definecolor{hexcolor0xedadad}{rgb}{0.929,0.678,0.678}
\definecolor{hexcolor0xbcaff5}{rgb}{0.737,0.686,0.961}
\definecolor{hexcolor0xb3abf1}{rgb}{0.702,0.671,0.945}
\definecolor{hexcolor0x030000}{rgb}{0.012,0.000,0.000}
\definecolor{hexcolor0x0deb23}{rgb}{0.051,0.922,0.137}
\definecolor{hexcolor0x14f110}{rgb}{0.078,0.945,0.063}
\definecolor{hexcolor0xfd978b}{rgb}{0.992,0.592,0.545}
\definecolor{hexcolor0xf5ad9c}{rgb}{0.961,0.678,0.612}
\definecolor{hexcolor0xf6bd8b}{rgb}{0.965,0.741,0.545}
\definecolor{hexcolor0xf6c375}{rgb}{0.965,0.765,0.459}
\definecolor{hexcolor0x7360e6}{rgb}{0.451,0.376,0.902}
\definecolor{hexcolor0xf27373}{rgb}{0.949,0.451,0.451}
\definecolor{hexcolor0xf1a269}{rgb}{0.945,0.635,0.412}
\tikzstyle{rn}=[circle,fill=Red,draw=Black,line width=0.8 pt]
\tikzstyle{gn}=[circle,fill=Lime,draw=Black,line width=0.8 pt]
\tikzstyle{yn}=[circle,fill=Yellow,draw=Black,line width=0.8 pt]
\tikzstyle{lu}=[circle,fill=hexcolor0xa0fdaa,draw=hexcolor0xa7fab7]
\tikzstyle{ne}=[circle,fill=hexcolor0xefb4b4,draw=hexcolor0xedadad]
\tikzstyle{li}=[circle,fill=hexcolor0xbcaff5,draw=hexcolor0xb3abf1]
\tikzstyle{ne}=[circle,fill=hexcolor0xefb4b4,draw=hexcolor0xedadad]
\tikzstyle{basic}=[circle,fill=hexcolor0x030000,draw=Black]
\tikzstyle{basic2}=[circle,fill=hexcolor0xf5ad9c,draw=hexcolor0xf5ad9c]
\tikzstyle{gg}=[circle,fill=hexcolor0x0deb23,draw=hexcolor0x14f110]
\tikzstyle{newstyle}=[rectangle,fill=hexcolor0xfd978b,draw=hexcolor0xf5ad9c]
\tikzstyle{nine}=[rectangle,fill=hexcolor0xf6bd8b,draw=hexcolor0xf6c375]
\tikzstyle{annot} = [text width=4.8em, text centered]
\tikzstyle{pl}=[->,draw=hexcolor0x7360e6,line width=1.100]
\tikzstyle{mo}=[->,draw=hexcolor0xf27373,line width=1.100]
\tikzstyle{simple}=[-,draw=Black,line width=2.000]
\tikzstyle{arrow}=[-,draw=Black,postaction={decorate},decoration={markings,mark=at position .5 with {\arrow{>}}},line width=2.000]
\tikzstyle{tick}=[-,draw=Black,postaction={decorate},decoration={markings,mark=at position .5 with {\draw (0,-0.1) -- (0,0.1);}},line width=2.000]
\tikzstyle{newt}=[->,draw=Black]
\tikzstyle{p}=[->,draw=hexcolor0x7360e6,line width=1.600]
\tikzstyle{nina}=[->,draw=hexcolor0xf1a269,line width=0.8]
\tikzstyle{pot}=[-,draw=Black]
\tikzstyle{pol}=[->,draw=Black,line width=1.400]
\title{Deep Neural Networks Are Congestion Games:\\ From Loss Landscape to Wardrop Equilibrium and Beyond}
\author{Nina Vesseron\\ENS Lyon,\\F-69000, Lyon, France\\ \url{name.surname@ens-lyon.fr} \And Ievgen Redko, Charlotte Laclau \\
  Univ Lyon, UJM-Saint-Etienne, CNRS, Institut d'Optique Graduate School\\
  Laboratoire Hubert Curien UMR 5516, F-42023, Saint-Etienne, France\\
  \url{name.surname@univ-st-etienne.fr}}
\begin{document}
\maketitle
%

%

\begin{abstract}
    The theoretical analysis of deep neural networks (DNN) is arguably among the most challenging research directions in machine learning (ML) right now, as it requires from scientists to lay novel statistical learning foundations to explain their behaviour in practice. While some success has been achieved recently in this endeavour, the question on whether DNNs can be analyzed using the tools from other scientific fields outside the ML community has not received the attention it may well have deserved. In this paper, we explore the interplay between DNNs and game theory (GT), and show how one can benefit from the classic readily available results from the latter when analyzing the former. In particular, we consider the widely studied class of congestion games, and illustrate their intrinsic relatedness to both linear and non-linear DNNs and to the properties of their loss surface. Beyond retrieving the state-of-the-art results from the literature, we argue that our work provides a very promising novel tool for analyzing the DNNs and support this claim by proposing concrete open problems that can advance significantly our understanding of DNNs when solved. 
\end{abstract}

\allowdisplaybreaks
\section{Introduction}
Since the very seeding of the machine learning (ML) field, the ML researchers has constantly drawn inspiration from other areas of science both to develop novel approaches and to better understand the existing ones. One such notable example is a longstanding fruitful relationship of ML with game theory (GT) that manifested itself by the novel insights regarding the analysis of such different learning settings as reinforcement learning \cite{reinforc_1,reinforc_2,reinforc_3}, boosting \cite{freund96} and adversarial classification \cite{advers_1,Bruckner2011,advers_2} to name a few.  While the interplay between ML and GT in the above-mentioned cases is natural, \textit{ie}, reinforcement learning is a game played between the agent and the environment, boosting is a repeated game with rewards and adversarial learning can be seen as a traditional minimax game, very few works studied the connection between the deep neural networks (DNNs) and GT despite the omnipresence of the former in the ML field. Indeed, in recent years, deep learning has imposed itself as the state of the art ML method in many real-world tasks, such as computer vision or natural language processing to name a few \cite{Goodfellow-et-al-2016}. While achieving impressive performance in practice, training DNNs requires optimizing a non-convex non-concave objective function even in the case of linear activation functions and can potentially lead to local minima that are arbitrary far from global minimum. This, however, is not the typical behaviour observed in practice, as several works \cite{dauphin_14,GoodfellowV14} showed empirically that even in the case of training the state-of-the-art convolutional or fully-connected feedforward  neural networks one does not converge to suboptimal local minima. Such a mysterious behaviour made studying the loss surface of DNNs and characterizing their local minima one of the topics of high scientific importance for the ML community.  

In this paper, we propose a novel approach for analyzing DNNs' behaviour by modelling them as congestion games, a popular class of games first studied by \cite{Rosenthal1973} in the context of traffic routing. To this end, we first prove that linear DNNs can be cast as special instances of non-atomic congestion games defined entirely in terms of the DNNs main characteristics. This result allows us to successfully draw the parallel between local minima of the loss function of a linear DNN and the Wardrop equilibria of the corresponding non-atomic congestion game under some mild assumption on the considered loss function. As a consequence, we prove the well-known result provided by \cite{kawaguchi2016deep} for linear networks regarding the equivalence between the local minima of the loss function optimized by a linear DNN and its global optimum. Second, we study the case of non-linear DNNs with rectified linear activation function (ReLU) by considering the model proposed in the seminal work of \cite{choromanska2014loss}. We model such networks as congestion games where some resources available to the agents in the game can fail. In this latter setting, we show that the seminal model of \cite{choromanska2014loss} is essentially equivalent to the linear DNN model studied before and thus enjoys the same guarantees. To the best of our knowledge, the proposed approach for the analysis of DNNs has never been studied in the literature before and we expect it to have a very strong scientific impact due to the established formal connection between one of the most studied ML models and one of the most rich areas of GT in terms of the number of available results.

The rest of this paper is organized as follows. In Section 2, we review other existing works on the analysis of DNNs and their loss surface and convergence properties. In Section 3, we provide the required preliminary knowledge related to both DNNs and congestion games. Section 4 contains our main contributions that analyze both linear and non-linear DNNs. Finally, in Section 5 we emphasize the importance of the established connection between DNNs and congestion games and pose several open problems. 


\section{Related work}
Below, we briefly review the main related works with a particular emphasis on contributions analyzing the loss surface of DNNs\footnote{For more results on the theoretical analysis of gradient descent convergence for over-parametrized models including Recurrent and Convolutional NNs, we refer the interested reader to \cite[Section 2]{DuJLJSP17} and \cite[Section 1.2]{Allen-ZhuLS19}.} and those linking DNNs to GT. 
\paragraph{Analysis of DNNs' loss surface} 
While strong empirical performance of DNNs make of them a number one choice for many ML practitioners, it has been shown that training a neural network is NP-hard \cite{Blum1992} as it requires finding a global minimum of a non-convex function of high dimensionality. To circumvent this difficulty, the methods for convex optimization are widely used to train DNNs, but the reasons why these methods work well in practice remain unknown as, in principle, nothing restricts them from converging to poor local minima arbitrary far from the global minimum. To shed light on this, several works adapted a geometric approach to provide a justification for optimizing DNNs using convex optimization methods. This latter consists in studying the general class of non-convex optimization problems with desired geometric properties, \ie, equivalence of local minima to global optimum and negative curvature for every saddle point, and showing that DNNs belong to this class. In the case of the linear networks, such notable result was provided in the works of \cite{baldi89} and \cite{kawaguchi2016deep} who proved that local minima are global minimum when a squared loss function is considered. This statement was proved for non-linear networks as well, first by \cite{choromanska2014loss} who showed that the number of bad local minima decreases quickly with the distance to the global optimum and then by several other more recent follow-up works considering different NN's configurations \cite{HardtM17,FreemanB17,SoudryH18,SafranS18}. An important consequence for DNNs having these properties is that (perturbed) gradient descent provably converges to a global optimum in this case \cite{GeHJY15,Jin0NKJ17,DuJLJSP17}. Contrary to the works using the above-mentioned geometric approach, we obtain the same results for a family of loss functions resembling the squared loss relying solely on the properties of non-atomic congestion games. 

\paragraph{Game theory and ML} To the best of our knowledge, only two other studies have reduced learning of DNNs to game playing. In \cite{balduzzi2016deep}, the author studied DNNs with non-differentiable activation functions in order to explain why methods designed for convex optimization are guaranteed to converge on modern convnets with non-convex loss functions\footnote{While being highly insightful, this work was shown to have several flaws \cite[Supp. material, Section J]{SchuurmansZ16} that remain unaddressed up to now.}.  On the other hand, the authors of \cite{SchuurmansZ16} showed how supervised learning of a DNN with differentiable convex gates can be seen as a simultaneous move two-person zero-sum game in order to further establish the equivalence between the Karush-Kuhn-Tucker (KKT) points of a DNN and Nash equilibria of the corresponding game. With these results in hand, the authors illustrated empirically that a well-known regret matching algorithm often used to find coarse-correlated Nash equilibria can be used successfully to train DNNs. It is worth noticing that in both these papers, the games considered by the authors were designed for their specific purpose and have not been studied independently in the game theory field. On the contrary, in this paper we aim to study DNNs as instances of arguably one of the most studied classes of games in order to make the rich body of existing theoretical results proved for them readily available for the ML researchers. Also, unlike the two other papers, we study non-atomic games which are infinite-person games with each player having an infinitesimal impact on the game's analysis and Wardrop equilibria specific to such games contrary to games with a finite number of players and Nash equilibria considered before. 
\section{Background knowledge}
In this section, we briefly review the main definitions related to DNNs and congestion games. 
\paragraph{Deep Neural Networks} Let us consider a DNN defined as $N = (V, E, I, O, F)$, where 1) $V$ is a set of vertices, \ie, the total number of units in the neural network; 2) $E \subseteq V\times V$ is a set of edges; 3) $I = \{i_1, \dots, i_d\} \subset V$ is a set of input vertices equal to the number of input features; 4) $O = \{o_1, \dots, o_C\} \subset V$ is a set of output vertices of size equal to number of outputs and 5) $F = \{f_v: v \in V\}$ is a set of activation functions, where $f_v : \R \rightarrow \R$.

In the graph defined by $G = (V, E)$ and having a layered structure with $L$ layers, a path $p = (v_1, ..., v_L)$ with $v_1 \in I$ and $v_L \in O$ consists of a sequence of vertices such that $(v_j , v_{j+1}) \in E$ for all $j$.  We assume that $G$ is directed and contains no cycles, the input vertices have no incoming edges and the output vertices have no outgoing edges. 
We let $n_l$ denote the number of neurons at each layer $l \in [1,\dots,L]$ where $n_1 = d$ and $n_L = C$. We further associate a (trainable) weight $w_{ij}^{(l)}$ to an edge between vertex $v_i^{(l)}$ of layer $l$ and $v_j^{(l-1)}$ of layer $l-1$ and denote by $w^{(l)}$ the matrix of all weights between the two layers. $W=\{w^{(\ell)}, \forall \ell\}$ is the set of all parameters associated to the network. For each vertex $v_i^{(l)}$, we also associate a value (activation function) $g_i^{(l)} = f_{v_i^{(l)}} (z_i^{(l)})$ with $z_i^{(l)} = \sum_k^{n_{l-1}} w_{jk}^{(l)}g_k^{(l-1)}$.

Given a training set $\mathcal{L} = \{x^j,y^j\}_{j=1}^M$ drawn from distribution $\D$ on $\XY$ with $\X \subseteq \R^d$ and $\vert \Y \vert = C$, the task of the neural network is to produce a predictor $h:\X \rightarrow Y$ that assigns a label $y^j \in Y$ to each $x^j \in \X$. 
This is done by solving the following optimization problem:
    \begin{align}
        \min_{W} \textnormal{loss}(W) = \min_{W} \frac{1}{M}\sum_{j=1}^M\ell(o^L(x^j),y(x^j)),
        \label{eq:obj_function}
    \end{align}
where $\ell:\R\times\R\rightarrow \R_+$ is a convex loss function. Stochastic gradient descent (SGD) is commonly used to solve Problem \eqref{eq:obj_function} where the weights are updated either for each example $x$ or for a mini-batch.

\paragraph{Congestion Games} 
We consider a non-atomic version of the congestion games \cite{schmeidler1973} that were first defined in \cite{Rosenthal1973}. All along the paper, we use the definition of non-atomic congestion games from \cite{nacg_roughgarden} and use some of the results established in this paper. 

A non-atomic congestion game is defined by a 5-tuple $\textnormal{NCG} = (E(G),c,S,n,a)$ where the number of players is infinite and individual player's significance is considered to be infinitesimal. 
In such a game, $E$ is the set of resources of the game that we take to be a set of edges of a graph $G$. We associate to each element $e \in E$ a cost function $c_e(\xi)$ which is the cost of the resource $e$ as a function of the congestion $\xi$ on the edge $e$. It is assumed that each cost function is positive and continuous on the interval $[0,+\infty]$. We further consider $k$ types of players with each player of type $i \in [[k]]$ disposing a set of strategies $S_i$ given by a set of subsets of $E$ and represented by the interval $[0,n_i]$ endowed with Lebesgue measure. We associate a positive rate of consumption $a_{S,e}$ to a type of player $i$, a strategy $S \in S_i$ and an edge $e \in S$. One can see this rate as the amount of congestion that players of type $i$ contribute to element $e$ while selecting strategy $S$. In what follows, we impose $a_{S,e} = 0$ if $e \notin S$.
An action distribution is a vector $z$ of positive reals indexed by $\bigcup_{i=1}^d S_i$ such that $\sum_{S \in S_i} z_S = n_i, \forall i$. One can see $z_S$ as the measure of the set of players that selects strategy $S$. We call $\tilde{z}_e$ the total amount of congestion on element $e$ produced by the action distribution $z$:
$$\tilde{z}_e = \sum_{i=1}^d \sum_{S \in S_i} a_{S,e}z_S.$$

The cost $c_S(z)$ incurred by a player of type $i$ selecting strategy $S \in S_i$ is defined with respect to (w.r.t) the action distribution $z$ as follows:
$$c_S(z) = \sum_{e \in S} a_{S,e} c_e(\tilde{z}_e).$$
The social cost $\textnormal{SC}(z)$ w.r.t. an action distribution $z$ and the social optimum $\textnormal{SO}$ of a game are given respectively by:
$$\textnormal{SC}(z) = \sum_{i=1}^k \sum_{S \in S_i} c_S(z) z_S, \quad \text{SO} = \min_z \textnormal{SC}(z).$$
In what follows, when we speak about the value of an action distribution, we mean the value of the social cost associated to this distribution. 

An action distribution $z$ is a \textbf{Wardrop equilibrium} (\text{WE}) if for each player type $i\in [[k]]$ and strategies $S_1$, $S_2$ $\in S_i$ such that $z_{S_1}>0$, we have $c_{S_1}(z) \leq c_{S_2}(z)$.

The main results about non-atomic congestion games needed further are the followings:
 \begin{enumerate}[leftmargin = 1cm]
    \item[\textbf{P1.}] Social cost $\textnormal{SC}(z)$ can be rewritten as: 
    $$\textnormal{SC}(z) = \sum_{e \in E} c_e(\tilde{z}_e) \tilde{z}_e\text{ with } \tilde{z}_e = \sum_i \sum_{S \in S_i} a_{S,e}z_S.$$
    \item[\textbf{P2.}] Each $\textnormal{NCG}$ admits a Wardrop equilibrium.
    \item[\textbf{P3.}] All Wardrop equilibria have the same value.
    \item[\textbf{P4.}] For a given game $\textnormal{NCG}$, we define the price of anarchy (PoA) of a game as:
        $$\text{PoA}(\textnormal{NCG}) = \frac{\text{\text{WE}}(\textnormal{NCG})}{\text{SO}(\textnormal{NCG})}.$$
\end{enumerate}

\section{Main contributions}
We start by introducing the assumptions needed to model DNNs as non-atomic congestion games. We argue that these assumptions are not restrictive in practice and has been used in the literature before. Then, we proceed by formally proving the equivalence between DNNs and  non-atomic congestion games and by relating the local minima of the former to the Wardrop equilibria of the latter.
\subsection{Problem setup} 
Hereafter, we consider the following assumptions: 
\begin{enumerate}[leftmargin = 1cm]
    \item[\textbf{A1}.] $\forall i,j,l$, $w_{ij}^{(l)}\geq 0$ and $\forall i,l,$ $\sum_jw_{ij}^{(l)}=1$. 
    \item[\textbf{A2}.] $\X \subseteq \mathbb{R}_+^d$, ie, all learning samples are non-negative vectors.
    \item[\textbf{A3}.] $\forall l\geq 2$, $n_l \geq C$, ie, all hidden layers are wider than the output layer.
    \item[\textbf{A4}.] The loss can be written as: 
    $$\textnormal{loss}(W) = \sum_k \sum_j \ell(o_k^j,y_k^j)$$ with $o_k^j$ the value of the $k^{th}$ output of the DNN for the instance $x^j$.
\end{enumerate}
Regarding \textbf{A1}, one should note that the normalization of the weights have been commonly used both to study DNNs' properties and even to accelerate their training (see \cite{salimans2016}). On the other hand, the non-negativity constraint, which at first glance might seem too restrictive, have also been used by \cite{gautier2016globally} where the authors empirically demonstrate the lack of its negative impact on the NNs' expressiveness.
\textbf{A2} is naturally satisfied by numerous real-world data sets used to train DNNs, such as image collections or text corpora. \textbf{A3} implies for the hidden layers to be wider than the output layer and was used in \cite{NguyenH17} under the name of pyramidal structure assumption. While the importance of depth is often required for DNNs to have good approximation properties, the width of DNNs should be constrained to be wide enough to achieve disconnected decision regions as shown in \cite{NguyenM018}. Finally, \textbf{A4} restricts us to consider only those losses that are computed output-wise thus including many popular norm-based loss function. Note that considering this assumption is less restrictive than many other previous works on the subject that explicitly analyze only the least square loss. 
\begin{figure*}
\begin{minipage}{.4\linewidth}
\resizebox{\columnwidth}{!}{
\begin{tikzpicture}[shorten >=1pt,->,draw=black!50, scale = 0.9]
    \tikzstyle{every pin edge}=[<-,draw=black!50,shorten <=1pt]
    \tikzstyle{neuron}=[circle,fill=black!25,minimum size=24pt,inner sep=0pt]
    \tikzstyle{input neuron}=[neuron, fill=green!20];
    \tikzstyle{output neuron}=[neuron, fill=red!20];
    \tikzstyle{hidden neuron}=[neuron, fill=blue!20];
    \tikzstyle{annot} = [text width=4.8em, text centered]
	\begin{pgfonlayer}{nodelayer}
		\node [input neuron,pin=left: \color{gray}$i_1$] (0) at (-4, 0.75) {};
		\node [input neuron,pin=left: \color{gray}$i_2$] (1) at (-4, -1.25) {};
		\node [hidden neuron] (2) at (0, -0.25) { $f_{v_1^{(1)}}$};
		\node [hidden neuron] (3) at (0, 1.75) {$f_{v_2^{(1)}}$};
		\node [hidden neuron] (4) at (0, -2.25) {$f_{v_3^{(1)}}$};
		\node [output neuron,pin=right: \color{gray}$o_1$] (5) at (4, 1.75) {$f_{v_1^{(2)}}$};
		\node [output neuron,pin=right: \color{gray}$o_2$] (6) at (4, -0.25) {$f_{v_2^{(2)}}$};
		\node [output neuron,pin=right: \color{gray}$o_3$] (7) at (4, -2.25) {$f_{v_3^{(2)}}$};

	\end{pgfonlayer}
	\begin{pgfonlayer}{edgelayer}
		\draw [style=newt] (0) to node{} (3);
		\draw [style=newt] (0) to (2);
		\draw [style=newt] (0) to (4);
		\draw [style=newt] (1) to (3);
		\draw [style=newt] (1) to (2);
		\draw [style=newt] (1) to (4);
		\draw [style=newt] (3) to (5);
		\draw [style=newt] (3) to (6);
		\draw [style=newt] (3) to (7);
		\draw [style=newt] (4) to (7);
		\draw [style=newt] (4) to (6);
		\draw [style=newt] (4) to (5);
		\draw [style=newt] (2) to (7);
		\draw [style=newt] (2) to (6);
		\draw [style=newt] (2) to (5);
	\end{pgfonlayer}
	\end{tikzpicture}
}
\end{minipage}
\hspace{.8cm}
\begin{minipage}{.6\linewidth}
\resizebox{.9\columnwidth}{!}{
\begin{tikzpicture}[scale=0.8]
    \tikzstyle{every pin edge}=[<-,draw=black!50,shorten <=1pt]
    \tikzstyle{neuron}=[circle,fill=black!25,minimum size=24pt,inner sep=0pt]
    \tikzstyle{input neuron}=[neuron, fill=green!20];
    \tikzstyle{output neuron}=[neuron, fill=red!20];
    \tikzstyle{hidden neuron}=[neuron, fill=blue!20];
    \tikzstyle{annot} = [text width=4.8em, text centered]
    \tikzstyle{legend_vert}=[annot, text = olive, minimum size=2.5em]
    \tikzstyle{legend_orange}=[annot, text = violet]
    \tikzstyle{legend_rouge}=[annot, text = red];
	\begin{pgfonlayer}{nodelayer}
		\node [style=basic] (0) at (-6, 2) {};
		\node [style=basic] (1) at (-6, -2) {};
		\node [style=basic] (2) at (-4, -0) {};
		\node [style=basic] (3) at (-4, 2) {};
		\node [style=basic] (4) at (-4, -2) {};
		\node [style=basic] (5) at (1, 2) {};
		\node [style=basic] (6) at (1, -0) {};
		\node [style=basic] (7) at (1, -2) {};
		\node [style=basic] (8) at (-6, -0) {};
		\node [style=basic, fill = olive] (9) at (-9, 1) {};
		\node [style = basic, fill = violet] (10) at (-9, -1) {};
		\node [style=basic] (12) at (-1, 2) {};
		\node [style=basic] (13) at (-1, -0) {};
		\node [style=basic] (14) at (-1, -2) {};
		\node [style=basic2] (18) at (3, 1) {};
		\node [style=basic2] (19) at (3, -0) {};
		\node [style=basic2] (20) at (3, -1) {};
		\node [style=newstyle] (21) at (5, -0) {F};
		\node [style = legend_vert] (23) at (-5, -3) {$\text{Population 1}$ \\ $n_1=1$};
		\node [style = legend_orange] (24) at (0, -3) {$\text{Population 2}$ \\ $n_2=1$};
		\node[style = legend_vert,above of= 9, node distance=0.5cm] (25) {$d_1$};
		\node[style = legend_orange,above of= 10, node distance=0.5cm] (26) {$d_2$};
		\node [style = legend_rouge] (22) at (2,1.9) {$e_1^1$};
		\node [style = legend_rouge] (23) at (4,0.9) {$e_1^2$};
	\end{pgfonlayer}
	\begin{pgfonlayer}{edgelayer}
		\draw [style=pl] (0) to (3);
		\draw [style=pl] (8) to (2);
		\draw [style=pl] (1) to (4);
		\draw [style=newt] (9) to (0);
		\draw [style=newt] (9) to (8);
		\draw [style=newt] (9) to (1);
		\draw [style=newt] (10) to (0);
		\draw [style=newt] (10) to (8);
		\draw [style=newt] (10) to (1);
		\draw [style=newt] (4) to (14);
		\draw [style=newt] (2) to (13);
		\draw [style=newt] (3) to (12);
		\draw [style=newt] (2) to (12);
		\draw [style=newt] (2) to (14);
		\draw [style=newt] (4) to (13);
		\draw [style=newt] (4) to (12);
		\draw [style=newt] (3) to (13);
		\draw [style=newt] (3) to (14);
		\draw [style=pl] (12) to (5);
		\draw [style=pl] (13) to (6);
		\draw [style=pl] (14) to (7);
		\draw [style=mo] (5) to (18);
		\draw [style=mo] (6) to (19);
		\draw [style=mo] (7) to (20);
		\draw [style=mo] (18) to (21);
		\draw [style=mo] (19) to (21);
		\draw [style=mo] (20) to (21);
	\end{pgfonlayer}
\end{tikzpicture}
}
\end{minipage}
\vspace{-.2cm}
\end{figure*}
\begin{figure*}[!t]
\begin{tikzpicture}[trim left={(0,0)}, scale=0.8]
    \draw[-{Latex[width=3mm]}] (3,-3.5)--(3,-4.5) node[midway, right]{Step 1, 2};
    \draw[-{Latex[width=3mm]}] (15,-4.5)--(15,-3.5) node[midway, right]{Step 4};
\end{tikzpicture}
\end{figure*}
\vspace{-.2cm}
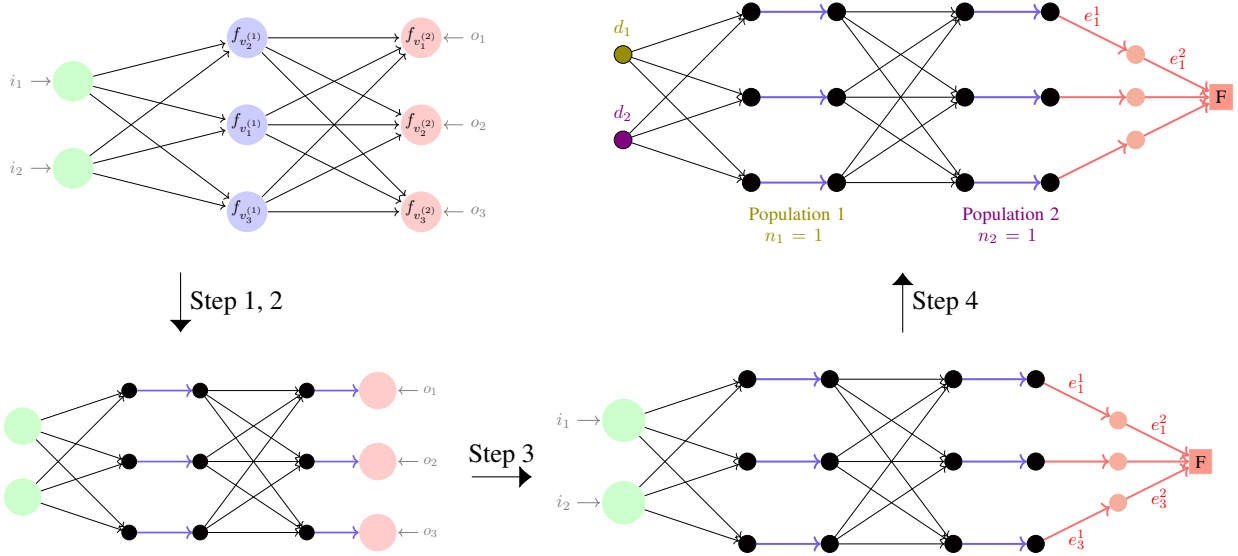
\begin{figure*}[!t]
\begin{minipage}{.4\linewidth}
\resizebox{.9\columnwidth}{!}{
\begin{tikzpicture}[scale = 0.8]
    \tikzstyle{every pin edge}=[<-,draw=black!50,shorten <=1pt]
    \tikzstyle{neuron}=[circle,fill=black!25,minimum size=24pt,inner sep=0pt]
    \tikzstyle{input neuron}=[neuron, fill=green!20];
    \tikzstyle{output neuron}=[neuron, fill=red!20];
    \tikzstyle{hidden neuron}=[neuron, fill=blue!20];
    \tikzstyle{annot} = [text width=4.8em, text centered]
	\begin{pgfonlayer}{nodelayer}
		\node [style=basic] (0) at (-6, 2) {};
		\node [style=basic] (1) at (-6, -2) {};
		\node [style=basic] (2) at (-4, -0) {};
		\node [style=basic] (3) at (-4, 2) {};
		\node [style=basic] (4) at (-4, -2) {};
		\node [style=basic] (8) at (-6, -0) {};
		\node [input neuron] (9) at (-9, 1) {};
		\node [input neuron] (10) at (-9, -1) {};
		\node [style=basic] (12) at (-1, 2) {};
		\node [style=basic] (13) at (-1, -0) {};
		\node [style=basic] (14) at (-1, -2) {};
		\node [output neuron,pin=right: \color{gray}$o_1$] (15) at (1, 2) {};
		\node [output neuron,pin=right: \color{gray}$o_2$] (16) at (1, 0) {};
		\node [output neuron,pin=right: \color{gray}$o_3$] (17) at (1, -2) {};
	\end{pgfonlayer}
	\begin{pgfonlayer}{edgelayer}
		\draw [style=pl] (0) to (3);
		\draw [style=pl] (8) to (2);
		\draw [style=pl] (1) to (4);
		\draw [style=newt] (9) to (0);
		\draw [style=newt] (9) to (8);
		\draw [style=newt] (9) to (1);
		\draw [style=newt] (10) to (0);
		\draw [style=newt] (10) to (8);
		\draw [style=newt] (10) to (1);
		\draw [style=newt] (4) to (14);
		\draw [style=newt] (2) to (13);
		\draw [style=newt] (3) to (12);
		\draw [style=newt] (2) to (12);
		\draw [style=newt] (2) to (14);
		\draw [style=newt] (4) to (13);
		\draw [style=newt] (4) to (12);
		\draw [style=newt] (3) to (13);
		\draw [style=newt] (3) to (14);
		\draw [style=pl] (12) to (15);
		\draw [style=pl] (13) to (16);
		\draw [style=pl] (14) to (17);
	\end{pgfonlayer}
\end{tikzpicture}
}
\end{minipage}
\begin{minipage}{.1\linewidth}
\begin{tikzpicture}[trim left={(0,0)},, scale = 0.8]
\draw[-{Latex[width=3mm]}] (-0.5,0)--(0.5,0) node[midway, above]{Step 3};
\end{tikzpicture}
\end{minipage}
\hspace{-1.2cm}
\begin{minipage}{.6\linewidth}
\resizebox{.9\columnwidth}{!}{
\begin{tikzpicture}[scale = 0.8]
    \tikzstyle{every pin edge}=[<-,draw=black!50,shorten <=1pt]
    \tikzstyle{neuron}=[circle,fill=black!25,minimum size=24pt,inner sep=0pt]
    \tikzstyle{input neuron}=[neuron, fill=green!20];
    \tikzstyle{output neuron}=[neuron, fill=red!20];
    \tikzstyle{hidden neuron}=[neuron, fill=blue!20];
    \tikzstyle{annot} = [text width=4.8em, text centered]
        \tikzstyle{legend_rouge}=[annot, text = red];
	\begin{pgfonlayer}{nodelayer}
		\node [style=basic] (0) at (-6, 2) {};
		\node [style=basic] (1) at (-6, -2) {};
		\node [style=basic] (2) at (-4, -0) {};
		\node [style=basic] (3) at (-4, 2) {};
		\node [style=basic] (4) at (-4, -2) {};
		\node [style=basic] (5) at (1, 2) {};
		\node [style=basic] (6) at (1, -0) {};
		\node [style=basic] (7) at (1, -2) {};
		\node [style=basic] (8) at (-6, -0) {};
		\node [input neuron,pin=left: \color{gray}$i_1$] (9) at (-9, 1) {};
		\node [input neuron,pin=left: \color{gray}$i_2$] (10) at (-9, -1) {};
		\node [style=basic] (12) at (-1, 2) {};
		\node [style=basic] (13) at (-1, -0) {};
		\node [style=basic] (14) at (-1, -2) {};
		\node [style=basic2] (18) at (3, 1) {};
		\node [style=basic2] (19) at (3, -0) {};
		\node [style=basic2] (20) at (3, -1) {};
		\node [style=newstyle] (21) at (5, -0) {F};
		\node [style = legend_rouge] (22) at (2,1.9) {$e_1^1$};
		\node [style = legend_rouge] (23) at (4,0.9) {$e_1^2$};
		\node [style = legend_rouge] (24) at (2,-1.9) {$e_3^1$};
		\node [style = legend_rouge] (25) at (4,-0.9) {$e_3^2$};
	\end{pgfonlayer}
	\begin{pgfonlayer}{edgelayer}
		\draw [style=pl] (0) to (3);
		\draw [style=pl] (8) to (2);
		\draw [style=pl] (1) to (4);
		\draw [style=newt] (9) to (0);
		\draw [style=newt] (9) to (8);
		\draw [style=newt] (9) to (1);
		\draw [style=newt] (10) to (0);
		\draw [style=newt] (10) to (8);
		\draw [style=newt] (10) to (1);
		\draw [style=newt] (4) to (14);
		\draw [style=newt] (2) to (13);
		\draw [style=newt] (3) to (12);
		\draw [style=newt] (2) to (12);
		\draw [style=newt] (2) to (14);
		\draw [style=newt] (4) to (13);
		\draw [style=newt] (4) to (12);
		\draw [style=newt] (3) to (13);
		\draw [style=newt] (3) to (14);
		\draw [style=pl] (12) to (5);
		\draw [style=pl] (13) to (6);
		\draw [style=pl] (14) to (7);
		\draw [style=mo] (5) to (18);
		\draw [style=mo] (6) to (19);
		\draw [style=mo] (7) to (20);
		\draw [style=mo] (18) to (21);
		\draw [style=mo] (19) to (21);
		\draw [style=mo] (20) to (21);
	\end{pgfonlayer}
\end{tikzpicture}
}
\end{minipage}
\caption{Illustration of how a non-atomic congestion game is constructed from a given DNN. \textbf{(upper left)} Example of a DNN with $d=2$, $L=1$, $n_2 = 3$ and $C=3$; \textbf{(upper right)} Graph associated to the DNN presented on the left. $B$ is the set of black edges, $J$ is the set of blue edges and $T$ is the set of red edges. \textbf{(bottom left)} and \textbf{(bottom right)} are the intermediate steps.}
\label{fig:NCG_lemma1}
\end{figure*}
\subsection{Analysis of linear DNNs} 
We start by proving a first result related to linear DNNs. We recall that while being quite restrictive in practice, the theoretical analysis of this setting is still challenging as it represents a non-convex optimization problem. Furthermore, we use it as a cornerstone result for our future developments as it allows to illustrate our proposed construction.
\begin{lemma}
Assume A1-4, let DNN be defined as $N = (V, E, I, O, F)$ with $F = \{f: \forall z, f(z)=z\}$, let $\textnormal{loss}(\cdot)$ be its associated loss function and let $\mathcal{L}=\{(x^j,y^j)\}_{j=1}^M$ be the learning sample. Then, one can construct a non-atomic congestion game $\text{\textnormal{NCG}}_N^\textnormal{loss} = (E,c,S,n,a)$ fully defined in terms of $N$, $\textnormal{loss}(\cdot)$ and $\mathcal{L}$.
\label{lemma_1}
\end{lemma}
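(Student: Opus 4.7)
The plan is to exhibit the five components of the NCG tuple $(E, c, S, n, a)$ explicitly by following the four-step augmentation sketched in Figure \ref{fig:NCG_lemma1}. The guiding intuition is that the DNN's layered computational graph naturally doubles as the routing network of a non-atomic game, with one type of player per (input feature, training example) pair, and with the loss paid on auxiliary edges appended downstream of the output layer.

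For the resource set, I would take the edge set of the underlying DNN graph $G$ and, following Step 1, enrich it with auxiliary ``loss-bearing'' edges: for each training example $j \in [M]$ and each output neuron $o_k$, an edge $e_k^j$ from $o_k$ to a per-example sink vertex $F^j$. Following Step 2, the player types are the $d\cdot M$ pairs $(i,j)$ with population $n_{i,j} := x_i^j$, non-negative by A2. A strategy $S\in S_{i,j}$, in Step 3, is any path from input vertex $i$ through exactly one neuron per hidden layer to some output $o_k$, continued along $e_k^j$ to $F^j$; I take the canonical consumption rate $a_{S,e}=1$ for $e\in S$ and $0$ otherwise.

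To close the construction in Step 4, each internal DNN edge is given a vanishing positive cost (so it contributes nothing to the social cost by P1), and each auxiliary edge receives the cost $c_{e_k^j}(\xi) = \ell(\xi, y_k^j)/\xi$, making $c_{e_k^j}(\tilde z_{e_k^j})\,\tilde z_{e_k^j} = \ell(\tilde z_{e_k^j}, y_k^j)$. A direct check using A1 (row-stochasticity of each $W^{(l)}$) shows that the ``weight-induced'' action distribution $z_S = x_i^j \prod_{l=1}^L w_{i_l, i_{l-1}}^{(l)}$ is a valid distribution (the telescoping products sum to $x_i^j = n_{i,j}$), that its congestion on $e_k^j$ equals the linear-DNN output $o_k^L(x^j)$, and that its social cost equals $\sum_{k,j}\ell(o_k^j,y_k^j)=\textnormal{loss}(W)$ by P1 and A4. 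Since every ingredient above is defined solely in terms of $N$, $\textnormal{loss}(\cdot)$ and $\mathcal{L}$, this proves the lemma.

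The main obstacle I expect is ensuring the continuity and positivity of the auxiliary cost functions at $\xi = 0$ demanded by the NCG definition: for natural losses with $\ell(0, y_k^j) \neq 0$, the ratio $\ell(\xi, y_k^j)/\xi$ diverges at the origin. The cleanest fix, hinted at by the extra buffer nodes in the lower-right panel of Figure \ref{fig:NCG_lemma1}, is to split each auxiliary edge into two edges (say $e_k^{j,1}$ and $e_k^{j,2}$) and distribute the loss across two cost functions that are jointly continuous on $[0,+\infty)$; alternatively, one may impose a mild regularity assumption on $\ell$ near $0$. Note that A3 (pyramidal structure) is not needed for this constructive lemma; it will only enter in the subsequent identification between local minima of the loss and Wardrop equilibria of $\textnormal{NCG}_N^{\textnormal{loss}}$.
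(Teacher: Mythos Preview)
Your construction is a valid proof of the lemma, but it differs from the paper's in several structural choices. The paper creates only $d$ player types (one per input feature), each with population $n_i=1$; a single sink $F$; and, for each output $o_k$, a \emph{chain} of $M$ edges $e_k^1,\dots,e_k^M$ (one per training example) that every path through $o_k$ must traverse. The training data enter not through the populations but through the consumption rates: $a_{S,e_k^j}=x_i^j$ for $S\in S_i$, while $a_{S,e}=1$ on the internal edges. The paper also turns every hidden/output \emph{node} into an additional zero-cost edge (the set $J$), anticipating the ReLU extension; you omit this, which is harmless for the linear case.

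What the two routes buy: your $(i,j)$-indexed types with $n_{i,j}=x_i^j$ and unit rates are conceptually cleaner, but they enlarge the space of action distributions, since nothing forces type $(i,j)$ and type $(i,j')$ to split across outputs in the same proportions. The paper's design makes the flow $z_{k,i}$ from type $i$ to output $k$ independent of $j$ by construction, so the admissible flows coincide exactly with the families $\{b_{k,i}\}$ induced by weight matrices; this is what makes the later two-way correspondence $\textnormal{loss}(W)=\textnormal{SC}(z_W)$ and the variational characterization of Wardrop equilibria go through without extra constraints. Your game would require an additional argument (or a symmetry observation at equilibrium) to recover that correspondence.

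Two smaller points. First, the paper does not resolve the $\xi\to 0$ issue structurally; it simply assumes in a Remark that $\ell(\xi,y_k^j)/\xi$ is positive, increasing, and well defined at $0$. The ``extra buffer nodes'' you point to in the lower-right panel of Figure~\ref{fig:NCG_lemma1} are precisely the intermediate vertices of the $M$-edge chains $e_k^1,\dots,e_k^M$, not a continuity device. Second, your observation that A3 is not needed for this lemma is correct; it is used only for the reverse direction (building $W$ from a flow) in the subsequent lemma.
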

\begin{proof}
We start by defining $E$ of the corresponding congestion game by applying a set of predefined rules to the network $N$ as follows: 
\begin{enumerate}
    \item Each edge of $N$ becomes an edge of $\text{\textnormal{NCG}}_N^\textnormal{loss}$. Denote by $B$ the set of these edges.
    \item Each node of $N$ with an activation function (nodes of the hidden layers and of the output layer) becomes an edge of the $\text{\textnormal{NCG}}_N^\textnormal{loss}$. Denote by $J$ the set of these edges.
    \item Each node of the output layer becomes a concatenation of $M$ edges of $\text{\textnormal{NCG}}_N^\textnormal{loss}$ which are added to the edge of $J$ so that the last edge points to a common node of the congestion game $F$. Denote by $T$ the set of these edges such that $T = \{e_k^j: e_k^j$ associated to $o_k$ for a tuple $(x^j,y^j)\}$. The index $k$ is used for the output number which is considered $1 \leq k \leq C$ and for a fixed $k$ the index $j$ is the number of the arc which is considered $1 \leq j \leq M$. Let $p_k$ be the set of the paths of the neural network (seen as a DAG) that include the concatenation of the $M$ edges associated to output $k$. 
    \item The $i^{th}$  node of the input layer become a node of the congestion game named $d_i$ for $1 \leq i \leq d$. 
\end{enumerate}
An illustrative example of such transformation is given in Figure \ref{fig:NCG_lemma1}. We now define $S$, $n$, $c$ and $a$ as follows. 
\begin{enumerate}[leftmargin=1.5cm]
    \item[\textbf{S,n}.] One population of players $i$ is created for each node $i$ of the input layer. The set of strategies of the player $i$, $S_i$, is the set of the paths from $d_i$ to $F$.  The size of the population $i$ is $1$ for each $i$, ie, $n_i = 1$.
    \item[\textbf{c}.] We define the cost of the edges as follows: 
    \[c_e(\xi) = \left\{  \begin{array}{l l} 0, & \text{ if } e \in B \cup J\\
     										\ell (\xi,y_k^j)/\xi, & \text{ if } e=e_k^j \in T. \end{array}\right.\]
    \item[\textbf{a}.] We define the rate of consumption as follows:
    \[\forall S \in S_i,\ a_{S,e} = \left\{  \begin{array}{l l} 1, &  \text{ if } e \in B \cup J\\
     										x_i^j, & \text{ if }  e=e_k^j \in T. \end{array}\right.\]
  
\end{enumerate}
The congestion game is entirely defined.
\end{proof}

\begin{remark} We implicitly assume that $c_{e_k^j}(\xi)=\ell (\xi,y_k^j)/\xi$ is positive and increasing in $\xi$ so as to respect the positivity and the increase of the cost functions of the associated congestion game. Moreover, it is assumed that $\ell (\xi,y_k^j)/\xi$ is well defined in 0. We discuss later how these assumption can be shown to cover some loss functions used in practice. 
\end{remark}
Let us now give the main theorem of the paper.
\begin{theorem}
Under the assumptions of Lemma \ref{lemma_1}, let $\ell(\xi,y_k^j) = A_k^j \xi^{\beta}$ with $A_k^j > 0$, ${\beta \geq 2}$. Then, given a neural network $N$, every local minimum of a loss function $\textnormal{loss}(\cdot)$ associated to $N$ is a Wardrop equilibrium of the associated congestion game $\textnormal{NCG}^\textnormal{loss}_N$. 
\label{theorem_1}
\end{theorem}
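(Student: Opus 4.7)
My plan is to simplify the Wardrop condition for our specific game, then to reformulate the loss as a convex function of the path-marginal variables $b_{k,i}$, and finally to transfer local optimality from $W$-space to $b$-space in order to read off Wardrop from the KKT conditions of the convex problem.

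\emph{Step 1: simplify Wardrop.} From Lemma~\ref{lemma_1}, edges in $B\cup J$ carry zero cost, so any strategy $S\in S_i$ terminating at output $k$ has cost
\[
c_S(z)=\sum_{j=1}^M x_i^j\,A_k^j\,(\tilde{z}_{e_k^j})^{\beta-1}=:C_i(k),
\]
depending only on the input population $i$ and the terminal output $k$, not on which internal DNN path $S$ takes. Hence checking Wardrop reduces to showing that, for every $i$ and every $k$ reached by at least one positive-weight DNN path from input $i$ at $W^*$, $C_i(k)\le C_i(k')$ for every alternative output $k'$.

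\emph{Step 2: convex reformulation in $b$.} Setting $b_{k,i}:=[W^{(L)}\cdots W^{(2)}]_{k,i}$, one has $o_k^j=\sum_i b_{k,i}x_i^j$, and the construction of Lemma~\ref{lemma_1} identifies $b_{k,i}=\sum_{S\in S_i\cap p_k}z_S$. Viewed as a function of $b$ alone, $\textnormal{loss}(b)=\sum_k\sum_j A_k^j(\sum_i b_{k,i}x_i^j)^\beta$ is convex, using $\beta\ge 2$, $A_k^j>0$ and A2 so that the argument of the $\beta$-th power is non-negative. A direct calculation gives $\partial\textnormal{loss}/\partial b_{k,i}=\beta C_i(k)$; writing the KKT conditions of $\min_b \textnormal{loss}(b)$ subject to $\sum_k b_{k,i}=1$ and $b_{k,i}\ge 0$ produces multipliers $\nu_i\in\mathbb{R}$ and $\mu_{k,i}\ge 0$ with $\mu_{k,i}b_{k,i}=0$ and stationarity $\beta C_i(k)=\mu_{k,i}-\nu_i$, which, after reading off the two cases $b_{k,i}>0$ and $b_{k,i}=0$, are exactly the Wardrop inequalities from Step~1.

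\emph{Step 3: transfer local minimality from $W$ to $b$.} It remains to argue that, at a local minimum $W^*$ of $\textnormal{loss}(W)$ under A1, the matrix $b^*:=b(W^*)$ is a minimiser of the convex $b$-problem. I would prove this via a first-order argument: any feasible direction $\Delta b$ at $b^*$ inside the stochastic polytope can be lifted to an A1-feasible weight perturbation $\Delta W$ at $W^*$ with $db(W^*)[\Delta W]=\Delta b$, using A3 (pyramidal hidden layers) to guarantee the required surjectivity of the differential. Local optimality of $\textnormal{loss}$ at $W^*$ then forces $\nabla_b\textnormal{loss}(b^*)\cdot\Delta b\ge 0$ in every feasible direction, which is the first-order optimality condition for the convex $b$-problem, hence by convexity a global optimum, hence the Wardrop equilibrium of $\textnormal{NCG}^{\textnormal{loss}}_N$.

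The main obstacle is the lifting argument in Step~3: at boundary configurations of $W^*$ with many zero weights, the differential $db(W^*)$ can degenerate and fail to be surjective onto the tangent cone of the stochastic polytope at $b^*$. My fallback plan is to bypass the lifting entirely and instead derive the $b$-KKT inequalities by chaining the $W$-KKT identities layer by layer, using complementary slackness on positive-weight edges to turn stationarity inequalities into equalities, the normalisation $\sum_j w_{ij}^{(l)}=1$ to aggregate siblings, and A2 to prevent any sign reversal when combining across samples; this is more combinatorial but avoids any rank condition on $db(W^*)$.
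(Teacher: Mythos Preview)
Your proof is correct and follows essentially the same route as the paper: both reduce to the path-marginal variables $b_{k,i}$ and show that the first-order optimality condition on the stochastic polytope $B$ coincides with the Wardrop condition, the paper phrasing this via variational inequalities and the differential identity $c_k^{j\prime}(t)\,t=(\beta-1)c_k^j(t)$ rather than KKT, but with identical content (your $\partial\textnormal{loss}/\partial b_{k,i}=\beta\,C_i(k)$ is exactly the computation behind their equivalence lemma). Your Step~3 is in fact more explicit than the paper's treatment, which invokes its surjectivity lemma for the map $W\mapsto b$ and then simply asserts that a local minimum in $W$ yields a local minimum of $\textnormal{loss}$ on $B$, without addressing the boundary/lifting issue you correctly flag.
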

\begin{proof_idea}
In what follows, we present the sketches of the proofs when they are lengthy, and refer the interested reader to the \hyperlink{appendix}{Appendix} for their complete versions. We start by relating the weight $W$ of the neural network $N$ to the flow in the associated congestion game such that the loss of the neural network becomes equal to the social cost of the associated congestion game. Then, we show how a local minimum $W$ of the loss function induces a distribution $z_W$ which is a Wardrop equilibrium of the associated congestion game. To this end, we use the result from \cite{Kinderlehrer2000} showing that every local minimum $x^*$ of a function $h$ belonging to class $C^1$ and defined on a closed and convex subset $X \subseteq \mathbb{R}^n$ verifies the following variational inequality: 
$$\langle \nabla h(x^*) , x-x^* \rangle \geq 0, \;\; \forall x \in X.$$
On the other hand, we can characterize the Wardrop equilibrium of a non-atomic congestion game  by proving that a distribution $z^*$ is a Wardrop equilibrium of $\textnormal{NCG}_N^\textnormal{loss}$ if and only if:
$$ \sum_i \sum_k \sum_j x_i^j c_k^j(\tilde{z}_{k}^{j*})(z_{k,i} - z_{k,i}^*) \geq 0, \;\; \forall z \in Z,$$
where $z_{k,i} = \sum_{S \in S_i \cap p_k} z_S$ with $p_k$ being the set of paths which include $(e_k^{j'})_{1 \leq j' \leq M}$ and $\tilde{z}_{k}^j = \sum_i x_i^j z_{k,i}.$
For the considered family of loss functions, we further show that variational characterization of a local minimum and that of a Wardrop equilibrium imply one another. The desired result is obtained by establishing that the flow associated to a local minimum $W$ is a Wardrop equilibrium of the associated game.
\end{proof_idea}
This theorem is important as it allows to deduce two corollaries, one on neural networks and the second about congestion games.
\begin{corollary}
Under the assumptions of Theorem 1, every local minimum of $\textnormal{loss}(\cdot)$ is a global optimum.
\end{corollary}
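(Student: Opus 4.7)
The plan is to derive the corollary as an immediate consequence of Theorem 1 combined with property \textbf{P3} of non-atomic congestion games, which asserts that all Wardrop equilibria of a given NCG share the same value of the social cost. The key observation is that Theorem 1 maps every local minimum of $\textnormal{loss}(\cdot)$ to a Wardrop equilibrium of $\textnormal{NCG}_N^\textnormal{loss}$, and \textbf{P3} then forces every such local minimum to attain the same loss value; the only remaining task is to identify this common value with the global infimum of the loss.

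More concretely, I would proceed as follows. First, I would argue that a global minimum of $\textnormal{loss}(\cdot)$ exists: under assumption \textbf{A1} the admissible weights live in a product of simplices, which is compact, and for the considered family $\ell(\xi,y_k^j) = A_k^j \xi^{\beta}$ with $\beta \geq 2$ the loss is continuous in $W$, so a minimizer $W_{\mathrm{opt}}$ is attained. Second, $W_{\mathrm{opt}}$ is in particular a local minimum, so Theorem 1 yields that the associated action distribution $z_{W_{\mathrm{opt}}}$ is a Wardrop equilibrium of $\textnormal{NCG}_N^\textnormal{loss}$. Third, for an arbitrary local minimum $W^*$, Theorem 1 again gives that $z_{W^*}$ is a Wardrop equilibrium. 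Fourth, by \textbf{P3} we have $\textnormal{SC}(z_{W^*}) = \textnormal{SC}(z_{W_{\mathrm{opt}}})$. Finally, invoking the identity $\textnormal{loss}(W) = \textnormal{SC}(z_W)$ established within the proof of Theorem 1 via the weight-to-flow correspondence, we obtain $\textnormal{loss}(W^*) = \textnormal{loss}(W_{\mathrm{opt}})$, so that $W^*$ is itself a global optimum.

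The main intellectual content already sits inside Theorem 1; the corollary is essentially bookkeeping. The one place where mild care is needed is to ensure that the map $W \mapsto z_W$ preserves the objective value, which is precisely the content of the weight-to-flow identification used in the proof of Theorem 1. It is worth emphasizing that the argument does not require the price of anarchy of the underlying game to be $1$: we never compare Wardrop equilibria to the social optimum of $\textnormal{NCG}_N^\textnormal{loss}$, only to each other through \textbf{P3}.
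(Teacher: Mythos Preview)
Your proposal is correct and follows essentially the same route as the paper's proof: apply Theorem 1 to both an arbitrary local minimum and to a global minimum (which is in particular local), invoke \textbf{P3} to equate their social costs, and use the identity $\textnormal{loss}(W)=\textnormal{SC}(z_W)$ to conclude. The only extra ingredient you supply is the explicit existence argument for a global minimizer via compactness of the product of simplices under \textbf{A1}, which the paper leaves implicit; otherwise the arguments coincide.
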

\begin{proof}
For a loss of this type, we have shown that a local minimum of the linear neural network is a Wardrop equilibrium of the associated congestion game. As a local minimum, the global minimum is also a Wardrop equilibrium. It is known that for non-atomic congestion games, the Wardrop equilibria have the same social cost. Because the value of the loss function at $W$ is equal to the value of the social cost of the associated congestion game at $z_W$, local minimums and global ones have the same value which is the value of the Wardrop equilibria. 
\end{proof}
\begin{corollary}
Under the assumptions of Theorem 1, $\textnormal{PoA}(\textnormal{NCG}_N^\textnormal{loss}) = 1$.
\end{corollary}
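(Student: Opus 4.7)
The plan is to chain together Theorem 1, Corollary 1 and P3 via the bidirectional correspondence between weight configurations and action distributions that underpins the proof of Theorem 1. By P4, $\textnormal{PoA}(\textnormal{NCG}_N^\textnormal{loss}) = \textnormal{WE}/\textnormal{SO}$, and by P3 every Wardrop equilibrium shares the same social cost, so it is enough to exhibit a single Wardrop equilibrium whose social cost equals $\textnormal{SO}$.

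I would start by using the correspondence $W \leftrightarrow z_W$ with $\textnormal{loss}(W) = \textnormal{SC}(z_W)$ in both directions (the forward direction is explicit in the Theorem 1 sketch; the reverse direction, enabled by A3, says that any admissible stochastic matrix of input-to-output path probabilities is realizable by some non-negative, row-stochastic weights) to conclude that the two minimization problems agree:
\begin{equation*}
\min_{W} \textnormal{loss}(W) \;=\; \min_{z} \textnormal{SC}(z) \;=\; \textnormal{SO}.
\end{equation*}
Both minima are attained, since $\textnormal{loss}$ is continuous and, under A1, the admissible weight set is a product of simplices, hence compact.

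To conclude, let $W^*$ be a global minimum of $\textnormal{loss}$; equivalently, by Corollary 1, any local minimum. Theorem 1 then states that the flow $z_{W^*}$ induced by $W^*$ is a Wardrop equilibrium of $\textnormal{NCG}_N^\textnormal{loss}$, and by construction $\textnormal{SC}(z_{W^*}) = \textnormal{loss}(W^*) = \textnormal{SO}$; combined with P3, every Wardrop equilibrium has social cost $\textnormal{SO}$, so the ratio in P4 evaluates to $1$. The one technical point that is not immediate from the previously established results is the reverse direction of the weights-to-flow correspondence used in the second step: without it one would only obtain $\textnormal{PoA} \leq 1$, because a priori $\min_z \textnormal{SC}(z)$ could be strictly smaller than $\min_W \textnormal{loss}(W)$ through action distributions that no weight configuration can realize. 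Assumption A3 (pyramidal hidden layers) is precisely what rules this out, so no additional argument is required.
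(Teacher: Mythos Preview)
Your argument is correct and is essentially the paper's own proof, just spelled out more carefully: both hinge on the bidirectional correspondence $\textnormal{loss}(W)=\textnormal{SC}(z_W)$ (the reverse direction, under A3, is indeed established in the appendix lemma inside the proof of Theorem~1), together with Theorem~1 applied to a global minimizer and P3. One small slip in your closing remark: without the reverse direction you would only get $\textnormal{PoA}\geq 1$, not $\leq 1$, since then $\textnormal{SO}=\min_z \textnormal{SC}(z)$ could be strictly smaller than $\textnormal{loss}(W^*)=\textnormal{SC}(z_{W^*})=\textnormal{WE}$; this does not affect the validity of your main chain of implications.
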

\begin{proof}
A global minimum of the loss of $N$ is a social optimum of the associated congestion game. Global minima are also Wardrop equilibria. Then, because the value of the loss function at $W$ is equal to the value of the social cost of the associated congestion game at $z_W$, we get $\text{WE}=\text{SO}$ and $\text{PoA}=1$.
\end{proof}

\subsection{Learning with loss function from Theorem 1}
We now explain how we can use the loss studied in Theorem 1 in practice when dealing with classification task where $y^j$ is a binary vector with only one coordinate equal to $1$ and the rest being equal to $0$. For each $j$, let us denote by $\mathsf{e}_j$ the coordinate of the vector $y^j$ which is equal to 1, \ie, $y^j_{\mathsf{e}_j} =1$ and $y^j_k = 0$ for $k \neq \mathsf{e}_j$. Moreover, we consider normalized inputs such that $\sum_i x_i^j = 1$ for all $j$. Then, we can use our loss functions in the following way: 
\begin{enumerate}
    \item We fix $\beta \geq 2$.
    \item For each $j$, we impose $A_k^j =1$ for $k \neq \mathsf{e}_j$ and $A_k^j =0$ if $k = \mathsf{e}_j$. By this way, we penalize the outputs of the network which have to be equal to 0, while putting no penalty on the outputs that need to be equal to 1. 
\end{enumerate}
We can deduce the following corollary. 
\begin{corollary}
Under the assumptions of Theorem 1, let $C=2$ and let $\ell$ be the squared loss. Then, a local minimum of $\textnormal{loss}(\cdot)$ is a global minimum.
\label{corollary:binary}
\end{corollary}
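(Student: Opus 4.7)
The plan is to massage the squared loss into the parametric form $\ell(\xi,y_k^j) = A_k^j\xi^{\beta}$ required by Theorem~1 and then invoke its first corollary (local minimum $=$ global minimum). The crucial ingredient is the identity $o_1^j + o_2^j = 1$ for every training example $j$. I would establish it by working on the NCG side of the bijection sketched after Lemma~1: each input population has unit mass ($n_i = 1$) and every player routes its entire mass through exactly one output edge, which yields $\sum_k z_{k,i}=1$; combined with the input normalization $\sum_i x_i^j = 1$ adopted in the preceding classification subsection, this gives $\sum_k o_k^j = \sum_i x_i^j \sum_k z_{k,i} = 1$.

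With the identity in hand, I would expand one sample's squared loss using the one-hot structure $y^j_{\mathsf{e}_j}=1$ and $y^j_{k}=0$ for $k\neq \mathsf{e}_j$:
\begin{align*}
\sum_{k=1}^{2}(o_k^j - y_k^j)^2 \;=\; (o_{\mathsf{e}_j}^j - 1)^2 + (o_{k\neq\mathsf{e}_j}^j)^2 \;=\; 2\,(o_{k\neq\mathsf{e}_j}^j)^2,
\end{align*}
where the last step substitutes $o_{\mathsf{e}_j}^j = 1 - o_{k\neq\mathsf{e}_j}^j$. Summed over $j$, the squared loss becomes $\sum_j 2(o_{k\neq\mathsf{e}_j}^j)^2$, which is precisely Theorem~1's loss family with $\beta=2$ and the $A_k^j$'s prescribed in the previous subsection ($A_k^j=2$ for $k\neq\mathsf{e}_j$, $A_k^j=0$ for $k=\mathsf{e}_j$). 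The induced per-edge costs $c_{e_k^j}(\xi)=A_k^j \xi$ are positive and non-decreasing on $[0,\infty)$, so Lemma~1 and Theorem~1 apply verbatim, and the first corollary of Theorem~1 gives that every local minimum of $\textnormal{loss}(\cdot)$ is a global minimum.

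The main obstacle is making the identity $o_1^j + o_2^j = 1$ fully rigorous: A1 only controls the row sums of each layer's weight matrix, not the column sums of the composed input-to-output map, so the equality does not drop out of a naive feed-forward calculation. Carrying out the accounting inside the congestion-game reformulation — where the one-path-per-player structure of the game structurally enforces the required mass bookkeeping $\sum_k z_{k,i}=n_i=1$ — is the cleanest way to close this gap. Once that is in place, the collapse of the two squared-loss terms into a single term of the form $2(o_{k\neq\mathsf{e}_j}^j)^2$ is immediate, and the remainder of the proof is a mechanical verification against the already-established theorem and its first corollary.
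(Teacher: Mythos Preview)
Your proposal is correct and follows essentially the same route as the paper: both arguments use the input normalization $\sum_i x_i^j=1$ together with the NCG mass constraint $\sum_k z_{k,i}=n_i=1$ (which the paper records as $\sum_k b_{k,i}=1$) to obtain $o_{\mathsf{e}_j}^j = 1 - o_{k\neq\mathsf{e}_j}^j$, collapse the squared loss to $2\sum_j (o_{k\neq\mathsf{e}_j}^j)^2$, and then invoke Corollary~1. The only cosmetic difference is that the paper first writes $S\textnormal{loss}(W)=\textnormal{loss}(W)+\sum_j\bigl(\sum_{k\ne\mathsf{e}_j}\tilde b_k^j\bigr)^2$ for general $C$ before observing that the extra term equals $\textnormal{loss}(W)$ when $C=2$, whereas you work directly in the binary case.
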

\begin{proof_idea}
We rewrite our loss and the squared loss and show that for $C>2$ they differ by a constant. We then analyze this constant and prove that for $C=2$ it reduces to the squared loss thus leading to the same optimization objective. 
\end{proof_idea}

\subsection{Extension to DNNs with ReLUs} 
We now proceed to a study of non-linear DNNs with activation functions $F$ given by ReLUs defined as: 
\begin{align}\label{eq:f_nonlinear}
\begin{array}{l|rcl}
f_v : & \R & \longrightarrow & \R \\
    & x & \longmapsto & \max(0,x).
    \end{array}  
\end{align}
Such a non-linear DNN can be seen as a linear DNN where some paths of the graph underlying the network fail and thus can be also seen as non-atomic congestion game where edges of $J$, which represent the activation functions, can fail depending on the congestion that occurs on them. More precisely, one can show that for a non-linear DNN with ReLUs, its $k^{th}$ output for the instance $x^j$ is given by:
$$o_k^j = \sum_{p \in p_k} z^j_p x^j_p  w_p,$$
where $p_k$ is the set of paths that end to output $k$,  $w_p$ is the product of the weights on the path $p$ and $x^j_p$ is the value of the coordinate of $x^j$ from which the path $p$ starts. As for $z^j_p$, it is a variable that is equal to $1$ if all ReLUs $f_v$ encountered in the path $p$ are such that $f_v(g_v^j) = g_v^j$ where $g_v$ is the value of the node $v$ on the example $x^j$ and $0$, otherwise. In other words, the variable $z^j_p$ reflects whether the path $p$ is active ($z^j_p = 1$) or not ($z^j_p = 0$) depending on the ReLU activation on the path $p$ for the instance $x^j$. 

As non-linear DNNs are notoriously hard to study, most papers introduce simplifications to model non-linearities in order to analyze the simplified models afterwards. One prominent example of such modelization was introduced by \cite{choromanska2014loss} and further improved by \cite{kawaguchi2016deep} who successfully discarded several of the unrealistic assumptions of the original model and lightened others. For our analysis, we use some of the lighter assumptions made in the latter paper. The first assumption, denoted by $A1p_m$ in the corresponding paper, states that $z^j_p$ are Bernoulli random variables with the same probability of success $\rho$. The second assumption, called $A5u_m$, states that $z^j_p$ are independent from the inputs $\{x^j\}_{j=1}^M$ and the weight parameters $W$. These assumptions, although remaining unrealistic in case of $A5u_m$, allow us to write the expected output $o_k^j$ as follows: 
\begin{align}
\label{eq:output_nonlinear}
  \mathbb{E}(o_k^j) = \sum_{p \in p_k} \rho x^j_p  w_p = \rho \sum_{p \in p_k} x^j_p  w_p.
 \end{align}
One can remark that the output of the network has simply been multiplied by $\rho$.
Given a non-linear DNN $N$, let $\text{lin}(N)$ be the linear DNN associated to $N$ where all activation functions are replaced by the function:
\begin{align*}
\begin{array}{l|rcl}
f_v : & \R & \longrightarrow & \R \\
    & x & \longmapsto & x.
    \end{array}
\end{align*}
We now show how we can reduce non-linear DNNs to congestion games with failures by adapting the results obtained for atomic congestion games with failures in the paper \cite{article} to a non-atomic case. 
\begin{lemma}
Assume A1-4, $A1p_m$, $A5u_m$, let $N = (V, E, I, O, F)$ with $O$ and $F$ defined as in \eqref{eq:output_nonlinear} and \eqref{eq:f_nonlinear}, respectively. Let $\textnormal{loss}(\cdot)$ be its associated loss function and let $\{(x^j,y^j)\}_{j=1}^M$ be the available learning sample. Then, $N$ can be reduced to a non-atomic congestion game with failures $\text{\textnormal{NCGF}}_N^\textnormal{loss} = (E,c,S,n,a)$ fully defined in terms of $N$ and $\textnormal{loss}(\cdot)$.
\label{lemma_1_nonlinear}
\end{lemma}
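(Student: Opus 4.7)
The plan is to reuse verbatim the four-step graph construction from the proof of Lemma~\ref{lemma_1}, then superimpose a failure mechanism on the edges in $J$ that corresponds to the Bernoulli behavior of the ReLUs stipulated by $A1p_m$--$A5u_m$. Concretely, given $N$ I would first build the graph of $\textnormal{NCGF}_N^\textnormal{loss}$ by the same rules: each edge of $N$ becomes an edge in $B$; each hidden/output activation node becomes an edge in $J$; each output node becomes a concatenation of $M$ edges in $T$ ending at a common sink $F$; the $i^{th}$ input neuron becomes a source node $d_i$. The game components are then defined exactly as in Lemma~\ref{lemma_1}: one population of mass $n_i=1$ per input coordinate, strategies $S_i$ being $d_i$-to-$F$ paths, rates $a_{S,e}=1$ on $B\cup J$ and $a_{S,e}=x_i^j$ on $e_k^j\in T$, and costs $c_e\equiv 0$ on $B\cup J$ with $c_{e_k^j}(\xi)=\ell(\xi,y_k^j)/\xi$ on $T$.

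The ingredient that distinguishes $\textnormal{NCGF}$ from $\textnormal{NCG}$ is the failure model. Following the atomic construction of \cite{article} but transposed to the non-atomic setting, I would attach to every edge of $J$ an independent Bernoulli$(\rho)$ variable declaring whether that resource is functional. A player selecting strategy $S$ contributes its consumption $a_{S,e}=x_i^j$ to an edge $e_k^j\in T$ only when every $J$-edge along $S$ is functional, so that the effective congestion $\tilde{z}_e$ on $e_k^j$ becomes the random variable $\sum_{S\in S_i\cap p_k}\mathbf{1}\{S\text{ has no failed }J\text{-edge}\}\,a_{S,e}\,z_S$. Assumptions $A1p_m$ and $A5u_m$ are exactly what is needed to make these Bernoulli variables independent from $W$ and from the inputs, so that the expectation factors.

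Third, I would verify that this construction reproduces the loss of $N$ at the expected-output level. Taking expectation over the failures, the independence yields $\mathbb{E}[\tilde{z}_{e_k^j}]=\rho\sum_{S\in S_i\cap p_k} x_i^j z_S$, which by the same path-flow/weight correspondence used in Lemma~\ref{lemma_1} equals the expected output $\mathbb{E}(o_k^j)$ of equation \eqref{eq:output_nonlinear}. Plugging this into property $\mathbf{P1}$ gives $\mathbb{E}[\textnormal{SC}(z_W)]=\sum_k\sum_j\ell(\mathbb{E}(o_k^j),y_k^j)$, so $\textnormal{NCGF}_N^\textnormal{loss}$ is fully determined by $N$, $\textnormal{loss}(\cdot)$ and $\mathcal{L}$, as required.

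The main obstacle is to transpose the failure model of \cite{article}, originally formulated for atomic games with finitely many players, to the non-atomic regime in a way that stays consistent with Roughgarden's definition used in Section~3: one must argue that the randomized consumption rates still give rise to cost functions that are positive, continuous, and well-defined on $[0,+\infty]$, and that the factorization of $\rho$ across a path survives the passage to the continuum of infinitesimal players. Once this is granted, no new computation is needed beyond the one already carried out for Lemma~\ref{lemma_1}, since the only net effect of the failures is the uniform multiplicative prefactor $\rho$ visible in \eqref{eq:output_nonlinear}.
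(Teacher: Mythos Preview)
Your construction diverges from the paper's in the place where the failure mechanism is injected, and it also carries a small internal inconsistency.

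The paper's proof is almost purely definitional and far shorter than yours: it keeps $E,S,n,a$ \emph{identical} to those of $\textnormal{NCG}_N^{\textnormal{loss}}$ from Lemma~\ref{lemma_1} and modifies only the \emph{player cost} $c_S$. Specifically, if a chosen path $S$ has no failed resource, the cost is the usual $c_S(z)=\sum_{e\in S}a_{S,e}c_e(z_e)$ computed with the flow $z$ that already accounts for failures; if $S$ does contain a failure, the cost is set to a fixed constant $w_i$ depending only on the player type. That is the whole proof: no expectations, no verification that the social cost reproduces the loss (that is deferred to Lemma~\ref{lemma_2_nonlinear}).

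Your route instead leaves $c$ untouched and randomizes the \emph{consumption}: a strategy contributes to congestion on a $T$-edge only when all its $J$-edges are functional. That is a legitimate alternative way to build a ``game with failures,'' but it is not the paper's choice. It also pushes into this lemma the expected-social-cost computation that the paper handles later, so you are doing more than the statement asks.

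There is also a mismatch in your Bernoulli placement. Assumption $A1p_m$ posits that the \emph{path} indicator $z_p^j$ is Bernoulli$(\rho)$; if instead you attach an independent Bernoulli$(\rho)$ to every edge of $J$, a path with $L$ such edges succeeds with probability $\rho^{L}$, not $\rho$, and your claimed identity $\mathbb{E}[\tilde z_{e_k^j}]=\rho\sum_{S} x_i^j z_S$ would fail. To stay consistent with \eqref{eq:output_nonlinear} you would need either a single path-level Bernoulli$(\rho)$ or edge-level parameters whose product along every path equals $\rho$.
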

\begin{proof}
$E,S,n,a$ remain the same as for $\text{\textnormal{NCG}}_N^\textnormal{loss}$ from Lemma \ref{lemma_1}. The only modification is that if a player chooses a path with no failures, then its cost is $c_S(z) = \sum_{e \in S} a_{S,e} c_e(z_e)$ where $z$ is the flow of the game such that failures are taken into account. Otherwise, we impose $c_S(z) = w_i$ where $w_i$ is a constant associated to a player type $i$. $\text{\textnormal{NCGF}}_N^\textnormal{loss}$ is now defined. 
\end{proof}
Given $W$ and our set of assumptions, we study the same loss as in the linear case but on the expected outputs of the neural networks, ie, 
\begin{align}\label{eq:loss_nonlinear}
    \textnormal{loss}(W) = \sum_j \sum_k \ell(\mathbb{E}(o_k^j),y_k^j),
\end{align}
where $o_k^j$ is the $k^{th}$ output of $N$ for instance $j$.
We further let $\beta = 2$ in the definition of $\ell$ as done in \cite{kawaguchi2016deep} for the squared loss. We now establish the equivalence between the local minima of the non-linear model to those of the linear one.
\begin{lemma}
Under the assumptions of Lemma \ref{lemma_1_nonlinear}, let $\beta=2$ and let $\ell$ be as in \eqref{eq:loss_nonlinear}. Then, a local minimum of $\textnormal{loss}(W)$ of $N$ is a local minimum of the loss function of the corresponding linear network from Lemma \ref{lemma_1}.
\label{lemma_2_nonlinear}
\end{lemma}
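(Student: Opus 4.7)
The plan is to exploit the remarkable simplicity that assumptions $A1p_m$ and $A5u_m$ impose on the expected output: as recorded in \eqref{eq:output_nonlinear}, $\mathbb{E}(o_k^j)$ coincides, up to the multiplicative constant $\rho$, with the $k$-th output of the linear DNN $\text{lin}(N)$ evaluated on $x^j$. So the entire argument should reduce to showing that the two loss functions differ only by a positive scalar factor, after which local minima must coincide.

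Concretely, first I would write out the non-linear loss explicitly. Let $o_k^{j,\text{lin}}(W) := \sum_{p \in p_k} x^j_p w_p$ denote the $k$-th output of $\text{lin}(N)$ on example $x^j$, which (by the analysis used in the proof of Theorem~1) equals the $k$-th output of the linear network from Lemma~\ref{lemma_1}. Then by \eqref{eq:output_nonlinear} we have $\mathbb{E}(o_k^j) = \rho\, o_k^{j,\text{lin}}(W)$, and since $\beta=2$,
\begin{align*}
\textnormal{loss}_{\text{nl}}(W) &= \sum_j \sum_k A_k^j \bigl(\mathbb{E}(o_k^j)\bigr)^{2} \\
&= \rho^{2} \sum_j \sum_k A_k^j \bigl(o_k^{j,\text{lin}}(W)\bigr)^{2} = \rho^{2}\, \textnormal{loss}_{\text{lin}}(W),
\end{align*}
where $\textnormal{loss}_{\text{lin}}$ is exactly the loss function of $\text{lin}(N)$ covered by Theorem~1.

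With this identity in hand, the conclusion is immediate: since $\rho^{2} > 0$ (we may assume $\rho > 0$, else the network is trivial), the two loss functions share the same critical structure on the same feasible set of weights $W$ (the same simplex-type domain enforced by \textbf{A1}). Hence any $W^{*}$ that is a local minimum of $\textnormal{loss}_{\text{nl}}$ — meaning $\textnormal{loss}_{\text{nl}}(W) \geq \textnormal{loss}_{\text{nl}}(W^{*})$ for all $W$ in some neighborhood of $W^{*}$ — satisfies $\textnormal{loss}_{\text{lin}}(W) \geq \textnormal{loss}_{\text{lin}}(W^{*})$ on the same neighborhood, and is therefore a local minimum of the linear network's loss.

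The only subtlety, and hence the part where I would be most careful, is making sure that the squared-loss choice $\beta=2$ is really what produces a clean pull-out of $\rho^{2}$; for $\beta \neq 2$ the exponent of $\rho$ would still factor out, so the same argument goes through under the more general hypothesis of Theorem~1, but one should double-check that the non-negativity constraint on the weights and the pyramidal structure assumption \textbf{A3} — which define the admissible set of $W$ — are identical for $N$ and $\text{lin}(N)$, so that neighborhoods in weight space correspond exactly. Given that $N$ and $\text{lin}(N)$ share $V$, $E$, and the weight parametrization (only the activation functions differ), this is essentially immediate, and the proof is complete.
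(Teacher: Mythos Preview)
Your proposal is correct and follows essentially the same route as the paper: both arguments use \eqref{eq:output_nonlinear} to write $\mathbb{E}(o_k^j)=\rho\,\tilde b_k^{j}$, substitute into the loss, pull out the factor $\rho^{\beta}$ (here $\rho^{2}$), and conclude that the two losses are positive scalar multiples of one another and hence share local minima. The paper's proof is in fact slightly terser and keeps the general $\beta$ throughout, which is exactly the generalization you note at the end.
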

The final result obtained through the transitivity of properties shown in Theorem 1 is stated as follows.
\begin{corollary}
Under the assumptions of Lemma \ref{lemma_2_nonlinear}, a local minimum of a loss function in  \eqref{eq:loss_nonlinear} is a Wardrop equilibrium of the game associated to the linear network from Lemma \ref{lemma_1}. 
\end{corollary}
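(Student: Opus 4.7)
The proof is a direct composition of two results already in hand, and the plan is essentially one of transitivity. First, I would invoke Lemma \ref{lemma_2_nonlinear}: under assumptions A1--A4, $A1p_m$, $A5u_m$ and with $\beta = 2$, any local minimum $W^\star$ of the non-linear loss \eqref{eq:loss_nonlinear} is also a local minimum of the loss of the linearised network $\mathrm{lin}(N)$ obtained by replacing every ReLU by the identity. Second, I would apply Theorem \ref{theorem_1} to $\mathrm{lin}(N)$: since the loss is of the form $\ell(\xi, y_k^j) = A_k^j \xi^{\beta}$ with $\beta = 2 \geq 2$ and $A_k^j > 0$, every local minimum of its loss is a Wardrop equilibrium of the associated non-atomic congestion game $\mathrm{NCG}^{\mathrm{loss}}_{\mathrm{lin}(N)}$. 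Chaining the two implications yields exactly the statement of the corollary.

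The only real thing to check, which I would address before concluding, is that the hypotheses of the two intermediate results are mutually compatible so that the transitivity is legitimate. Concretely, I would verify (i) that the squared-type loss used in Lemma \ref{lemma_2_nonlinear} lies in the family covered by Theorem \ref{theorem_1}: the factor $\rho$ entering $\mathbb{E}(o_k^j) = \rho \sum_{p \in p_k} x^j_p w_p$ only rescales the argument by a positive constant and so can be absorbed into the coefficients $A_k^j$ without violating their positivity or the form $A_k^j \xi^{\beta}$; and (ii) that the structural hypotheses A1--A4 used to build $\mathrm{NCG}^{\mathrm{loss}}_{\mathrm{lin}(N)}$ through Lemma \ref{lemma_1} continue to hold for $\mathrm{lin}(N)$, which is immediate because $\mathrm{lin}(N)$ inherits the architecture, the non-negative normalised weights and the pyramidal width of $N$. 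There is no analytical obstacle here: the whole point of Lemma \ref{lemma_2_nonlinear} was to do the non-trivial work of showing that the expectation operator does not distort the location of local minima, so once that lemma is granted the remainder is simply a matter of quoting Theorem \ref{theorem_1} on the linear surrogate.
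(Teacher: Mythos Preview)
Your proposal is correct and matches the paper's own argument, which is simply the transitivity remark preceding the corollary: chain Lemma~\ref{lemma_2_nonlinear} (local minimum of the non-linear loss $\Rightarrow$ local minimum of the linear loss) with Theorem~\ref{theorem_1} (local minimum of the linear loss $\Rightarrow$ Wardrop equilibrium of $\textnormal{NCG}_{\mathrm{lin}(N)}^{\textnormal{loss}}$). Your additional sanity checks on hypothesis compatibility are sound and, if anything, more explicit than what the paper provides.
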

In particular, this corollary suggests that the model introduced for non-linear DNNs in \cite{choromanska2014loss} is equivalent to studying a linear DNN. 

Overall, our contributions establish the state-of-the-art results related to the analysis of the loss surface of both linear and non-linear DNNs following a completely novel approach. While this is an important contribution in itself, we belive that it further paves the way to several other highly promising research directions presented below. 

\section{Open problems}
In the introduction we claimed that our analysis can be used as a tool to prove other fundamental results about DNNs by modelling them as congestion games. Below, we aim to highlight this claim by rigorously formulating three open problems for future research. 
\paragraph{Impact of DNNs architecture} Characterizing PoA depending on the network topology and the used cost function is commonly done in the field of congestion games and we expect it to be very useful for DNNs as well. Indeed, it is known that PoA of non-atomic congestion games is independent of network topology \cite{Colini-Baldeschi17} when cost functions are polynomials of an arbitrary degree. Consequently, one may wonder whether our results can be extended beyond multi-layer networks to take into account other network architectures, such as U-Nets \cite{RonnebergerFB15}, with potentially different activation and/or loss functions. More formally, we propose the following open problem.
\begin{op}
For a DNN $N = (V, E, I, O, F)$, let $\textnormal{NCG}$ be a congestion game such that every local minimum/critical point and global minima of $\textnormal{loss}(\cdot)$ associated to $N$ are Wardrop equilibrium and social optimums of $\textnormal{NCG}$, respectively. Then, $\textnormal{PoA}(\textnormal{NCG}) = 1$ and does not depend on $G=(V,E)$, $F$ and $\textnormal{loss}(\cdot)$.
\end{op}
We note that this open problem can also lead to a negative result where the local minima inefficiency can be proved to be arbitrary high. To obtain such result, one will have to consider \textit{atomic} congestion games $\textnormal{CG}$ for which, contrary to Wardrop equilibria, different Nash equilibria can lead to outcomes with different costs. If there exists a DNN such that its associated atomic congestion game has $\text{PoA}(\textnormal{CG}) > 1$ for a particular choice of $G$, $F$ and $\text{loss}(\cdot)$, then such neural architectures may exhibit an arbitrary bad behaviour during the optimization. One such example is the $\text{PoA}$ in atomic splittable games with polynomial cost functions of degree $d$ for which there exists particular network topologies with $\text{PoA}$ behaving as $((1+\sqrt{d+1})/2)^{d+1}$ \cite{rough_local}. 

\color{black} \paragraph{Speed of convergence} Apart from the geometric approach that relies on the equivalence between local minima and the global optimum, another way to analyze DNN's behaviour is to study directly the optimization dynamics of SGD and its variations. In the context of games, this latter can be modelled by repeating the one-shot game, \ie, the game with one data point or a mini batch, over $T$ time steps and analyzing the average of the costs associated with the outcomes of each step. In our work, we provided a characterization of the PoA for one-shot non-atomic congestion game, but this analysis can be further applied to repeated games using the extension theorems and the notion of $(\lambda,\mu)-$smoothness developed in \cite{robustness}. In line with what we discussed above, we now consider atomic congestion games to allow for Nash equilibria with different costs. Such games can be characterized by the notion of a robust PoA $\rho$ defined in terms of the parameters $\lambda$ and $\mu$ and coinciding with the traditional PoA in several cases of interest. For these games, extension theorems ensure that the sequence of outcomes of a smooth game where every player experiences vanishing average (external) regret converges to the optimal outcome times the robust PoA.  This is due to the fact that for every pure/mixed/correlated/coarse-correlated equilibrium $z'$, $\frac{\mathbb{E}_{z \sim z'}[\textnormal{SC}(z)]}{\textnormal{SO}(\textnormal{CG})} \leq \rho$. Then, as a sequence of outcomes with vanishing external regret converges to a correlated equilibrium, we have the following.
\begin{op}
For a deep neural network $N = (V, E, I, O, F)$,  let $\textnormal{CG}$ be its corresponding $(\lambda,\mu)-$smooth congestion game such that \color{black}
\begin{enumerate}
    \item $\textnormal{loss}(W)$ is equal to the social cost $\textnormal{SC}(z_W)$.
    \item $\textnormal{SO(CG})$ is the global optimum of $\textnormal{loss}(\cdot)$. 
\end{enumerate}
Then, if $\textnormal{loss}(W^i)$ are social costs associating to a vanishing average
(external) regret, the following holds:
$$\frac{1}{T}\sum_{i=1}^T \textnormal{loss}(W^i) \leq (\rho(\textnormal{CG}) + o(1)) \textnormal{loss}(W^*) \text{ as } T \mapsto \infty$$
where $W^i$ is the outcome of iteration $i$ and $W^*$ is a global optimum.

Moreover, if each critical point $W$ of the loss function is either pure, mixed, correlated or coarse-correlated equilibrium of $\textnormal{CG}$, then $\frac{\textnormal{loss}W}{\textnormal{loss}W^*} \leq \rho(\textnormal{CG})$.
\end{op}
Note that \cite{balduzzi2016deep} related the coarse-correlated equilibrium to critical points of a non-linear DNN, but their arguments were shown to be flawed in \cite{SchuurmansZ16}. This latter work managed to draw the equivalence between critical points and Nash equilibrium correctly and used a popular regret matching algorithm to successfully learn DNNs. Unfortunately, their proposed learning strategy is applied to games solved on each vertex of a DNN and thus is not guaranteed to converge to a globally optimal strategy. In this regard, congestion games offer a more convenient alternative as they benefit from the convergence guarantees and allow to characterize the speed of this convergence based on the characteristics of the considered game.

\paragraph{Beyond backprop} As discussed above, game-theoretical interpretation of DNNs had already led to new learning strategies used to find equilibrium points in the corresponding games. As shown in \cite{SchuurmansZ16}, regret matching algorithm outperforms widely-used SGD and Adam methods and leads to sparser networks with higher accuracy when deployed on the test set. In the context of our work, we would like to make a step further and go beyond the regret matching algorithm mentioned above by proposing a new learning strategy based on optimal transport \cite{villani}. Optimal transport considers a problem of transforming one probability measure into another following the principle of the least effort. While traditionally optimal transport does not account for congestion effects, several recent works studied this variation and showed that in this case the solution can be related to the notion of the equilibria of Wardrop type \cite{carlier08,BlanchetC16}. This leads to the following open problem.
\begin{op}
For a deep neural network $N = (V, E, I, O, F)$, let $\textnormal{NCG}$ be its corresponding congestion game with equilibria given by:
\begin{align}
    \eta^* \in \textnormal{argmin}_\eta W_c(\mu,\eta) + \mathcal{E}(\eta),
    \label{wass_flow:our}
\end{align}
where $W_c$ is the Wasserstein distance between a measures defined on the input flow of $\textnormal{NCG}$ and $\mathcal{E}(\eta)$ is a function of congestion for an action distribution $\eta$. Then, $\eta^*$ is a critical point of $\textnormal{loss}(\cdot)$.
\end{op}
We note that several papers \cite{ChizatB18,rostkoff,mei18} used the notion of the Wasserstein gradient flow to show the convergence of convex optimization methods for overparametrized models. Their underlying idea was to consider a problem of learning a measure that minimizes the DNN's loss function and to study the dynamics of the gradient descent performed on its weights and positions. One question to answer thus is whether the Wasserstein gradient flow of solving \eqref{wass_flow:our} is naturally linked to the Wasserstein gradient flow considered in previous work? Such an equivalence may well indicate that the game-theoretical interpretation of DNNs reconciles both geometric approaches for studying DNNs loss surface and those based on analyzing their optimization dynamics.


\newpage

\Large{\hypertarget{appendix}{\textbf{Appendix}}}%

\normalsize
\section*{Proof of Theorem \ref{theorem_1}}
\begin{proof}
In what follows, we write $c_k^j := c_{e_k^j}$ and similarly for all quantities that have such subscripts to avoid cumbersome notations.
The proof of this theorem relies on several lemmas with the first one showing how the weight matrix $W$ of the neural network $N$ is related to the flow in the associated congestion game such that the \textnormal{loss} of the neural network becomes equal to the social cost of the associated congestion game.
\begin{lem}
Under the assumption of Lemma \ref{lemma_1}, a configuration $W$ of the neural network $N$ defines an action distribution $z_W$ of the associated congestion game $\text{\textnormal{NCG}}_N^{\textnormal{loss}}$ such that $\text{\textnormal{loss}}(W) = \textnormal{SC}(z_W)$. Similarly, for every action distribution $z_W$ of the associated congestion game $\text{\textnormal{NCG}}_N^{\textnormal{loss}}$ there exists a set of weights $W$ of $N$ such that $\text{\textnormal{loss}}(W) = \textnormal{SC}(z_W)$.
\end{lem}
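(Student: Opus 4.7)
My plan is to reduce the equality $\textnormal{loss}(W) = \textnormal{SC}(z_W)$ to a pointwise matching condition between the network outputs and the congestions on the $T$-edges, and then to exhibit in each direction a construction enforcing this condition. By property \textbf{P1} and the fact that every edge in $B \cup J$ has cost zero by construction of $\textnormal{NCG}_N^{\textnormal{loss}}$ in Lemma~\ref{lemma_1}, only the $T$-edges contribute to the social cost, so
\begin{align*}
\textnormal{SC}(z) \;=\; \sum_{k,j} \tilde{z}_{e_k^j}\, c_{e_k^j}(\tilde{z}_{e_k^j}) \;=\; \sum_{k,j} \ell(\tilde{z}_{e_k^j}, y_k^j).
\end{align*}
On the network side, \textbf{A4} and linearity of $N$ yield $\textnormal{loss}(W) = \sum_{k,j} \ell(o_k^j, y_k^j)$ with $o_k^j = \sum_i x_i^j\, b_{k,i}$, where $b_{k,i} := \sum_{p \in S_i \cap p_k} w_p$ collects the products of weights along every path from input $i$ to output $k$. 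Since $\tilde{z}_{e_k^j} = \sum_i x_i^j\, z_{k,i}$ with $z_{k,i} := \sum_{S \in S_i \cap p_k} z_S$, both expressions coincide as soon as $b_{k,i} = z_{k,i}$ for every $(k,i)$, and the lemma reduces to producing such a correspondence in each direction.

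For the forward direction, I would set the mass on each strategy $S \in S_i$ equal to the weight-product $w_p$ of the neural-network path $p$ to which $S$ corresponds. This immediately gives $z_{k,i} = b_{k,i}$. To verify that the resulting $z_W$ is a valid action distribution, one must check $\sum_{S \in S_i} z_{W,S} = n_i = 1$ for every input $i$, i.e.\ that the sum of weight-products over all paths leaving input $i$ equals one; this follows by a layerwise induction from the non-negativity and normalization constraints of \textbf{A1}, which telescope the sum across layers. Plugging $b_{k,i} = z_{k,i}$ back into the two expressions above then yields $\textnormal{SC}(z_W) = \textnormal{loss}(W)$.

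For the reverse direction, given a valid $z$, one must construct weights $W$ whose induced $b_{k,i}$ realise the prescribed $z_{k,i}$. This is the main obstacle of the proof: the map $W \mapsto (b_{k,i})$ is highly non-injective in deep networks, yet one must manufacture a single $W$ that satisfies \textbf{A1} at every layer while reproducing the target path-products exactly. I would exploit \textbf{A3}, which ensures every hidden layer has at least $C$ neurons, by dedicating $C$ ``channel'' neurons per hidden layer. The first hidden layer encodes the coefficients $z_{k,i}$ so that channel $k$ receives weight $z_{k,i}$ from input $i$; the intermediate layers propagate these channel values unchanged through identity-like blocks on the channels; and the output layer reads off output $k$ from channel $k$. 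The remaining hidden neurons receive zero incoming weight and carry zero activation, so their outgoing weights can be completed arbitrarily to satisfy \textbf{A1} without affecting any $b_{k,i}$. A direct layerwise check that every $W^{(l)}$ so constructed is admissible closes the argument.
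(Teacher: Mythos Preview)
Your proposal is correct and follows essentially the same route as the paper. In both directions the reduction to the identity $b_{k,i}=z_{k,i}$, the choice $z_S=w_p$ together with the probability-tree argument from \textbf{A1} for validity, and the reverse construction that loads $(z_{k,i})$ into the first weight matrix and propagates $C$ dedicated channels via identity blocks (using \textbf{A3}) while letting the remaining neurons carry zero activation, are exactly what the paper does; the only cosmetic difference is that the paper writes the reverse construction as explicit block matrices $w^{(1)}=\begin{pmatrix}0\\ w'\end{pmatrix}$ and $w^{(\ell)}=\begin{pmatrix}H_\ell & 0\\ 0 & I_C\end{pmatrix}$, whereas you describe it verbally. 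One small imprecision: for hidden layers beyond the first, the non-channel neurons need not receive \emph{zero incoming weight} (otherwise the outgoing sums of the previous layer's non-channel neurons could not equal one); what matters, and what you correctly rely on, is that they receive weight only from neurons with zero activation, so they carry zero activation themselves.
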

\begin{proof}
($\longrightarrow$) Given a flow $z$, the social cost of a congestion game can be written as:
$$\textnormal{SC}(z) = \sum_{e \in E} c_e(\tilde{z}_e) \tilde{z}_e.$$
For the studied congestion games, the following holds:
\begin{align*}
    \textnormal{SC}(z) &= \sum_{e \in E} c_e(\tilde{z}_e) \tilde{z}_e\\
    &= \sum_k \sum_j c^j_k(\tilde{z}_k^j) \tilde{z}_k^j \\
    &= \sum_k \sum_j \ell(\tilde{z}_k^j ,y_k^j).
\end{align*}
Given a set of weights $W$ of a given network $N$, the \textnormal{loss} of the network can be written as: 
$$\text{\textnormal{loss}}(W) = \sum_j \sum_k \ell(\tilde{b}_k^j,y_k^j)$$
with $b_{k,i} = \sum_{p \in S_i \cap p_k} w_p$ where $w_p$ is the product of the weights encountered in the path $p$ and $p_k$ is the set of paths that include the $M$ edges associated to the output $k$, $\tilde{b}_k^j = \sum_i x_i^j b_{k,i}$.

In order to relate $\tilde{z}_k^j$ and $\tilde{b}_k^j$, we let $k \in [1,\dots,C]$ and note that a player of type $i$ who wishes to travel on the edge $e_k^j$ travels on all the edges $(e_k^{j'})_{1 \leq j' \leq M}$. Thus, the measure of population $i$ using the edge $e_k^j$ is equal to the measure of population $i$ that uses the edge $e_k^{j'}$ and is denoted by $z_{k,i} = \sum_{S \in S_i \cap p_k} z_S$ where $p_k$ is the set of paths which include  $( e_k^{j'})_{1 \leq j' \leq M}$. One can verify that $$\tilde{z}_k^j = \sum_i x_i^j z_{k,i}.$$

Relating $\tilde{z}_k^j$ and $\tilde{b}_k^j$ now boils down to establishing the equality between $z_{k,i}$ and $b_{k,i}$. To this end, we note that for an action distribution $z$ defined such that for each $i$ and $S \in S_i$, $z_S = w_S$ with $w_S$ being the product of the weights encountered in the path $S$, $z_{k,i} =  \sum_{ S \in S_i \cap p_k } z_S  = \sum_{ S \in S_i \cap p_k } w_S = b_{k,i}$ which implies $\tilde{z}_k^j = \tilde{b}_k^j$ so that 
\begin{align*}
    \text{\textnormal{loss}}(W) &= \sum_j \sum_k \ell(\tilde{b}_k^j,y_k^j) = \sum_j \sum_k \ell(\tilde{z}_k^j,y_k^j)\\
    &= \textnormal{SC}(z_W).
\end{align*}
Note that $z$ is a valid distribution since $\sum_{S \in S_i} z_S = 1$ as each subgraph $S_i$ of the network with $d_i$ as root is a probability tree with positive and normalized weights. 
($\longleftarrow$) We prove the second statement of this lemma below. From the Assumption 1, it follows that all the matrices we consider in this proof are positive and verify that the sum of the coefficients on each column is equal to 1. Let $w^{(1)},...,w^{(L)}$ be the matrices of weights of the neural network such that, for $1 \leq \ell \leq L$, $w^{(\ell)}$ is the matrix of the weights which stands between layer $\ell-1$ and layer $\ell$. Concerning the dimensions of the matrices, we have that $w^{(1)} \in {M}_{n_1,d}(\mathbb{R}_+)$ and for $1 \leq \ell \leq L$, $w^{(\ell)}\in {M}_{n_{\ell},n_{\ell-1}}(\mathbb{R}_+)$. Let $w' \in {M}_{C,d}(\mathbb{R}_+)$ be the following matrix: $w' = (z_{k,i})_{1 \leq k \leq C \atop 1 \leq i \leq d}$.
Let $w''$ be the matrix such that $w'' = (b_{k,i})_{1 \leq k \leq C \atop 1 \leq i \leq d} $. One can verify that $w'' = w^{(L)}*...*w^{(1)}$
Now, we see that our problem boils down to whether for each matrix $w'$, there exists $w^{(1)},...,w^{(L)}$ such that $w' = w^{(L)}*...*w^{(1)}$. We will show that the answer is positive by using the fact that each matrix $w^{(\ell)}$ has dimensions superior to $C$ due to the Assumption 3. 
Let $w'$ be a positive and normalized matrix. Let us take $w^{(1)},...,w^{(L)}$ such that: 
    \[
    w^{(1)} = 
    \begin{pmatrix}
     0 \\
     w'
    \end{pmatrix},\quad 
    \text{and for } \ell=2,\dots,L:\quad
    w^{(\ell)} = 
    \begin{pmatrix}
     H_{\ell} & 0 \\
     0 & I_C
    \end{pmatrix}\quad
    \text{with } 
    H_{\ell} = 
    \begin{pmatrix}
     1 & \dots & 1 \\
     0 & \dots & 0 \\
     \vdots &   & \vdots \\
     0 & \dots & 0 
    \end{pmatrix},\
    I_C \in {M}_C(\mathbb{R}).
    \]
    One can verify that $w^{(1)},...,w^{(L)}$ are positive and normalized matrices. Then, we have $w^{(1)},...,w^{(L)}$ admissible matrices such that $w' = w^{(L)}*...*w^{(1)}$. 
\end{proof}
We now proceed by showing how a local minimum $W$ of the \textnormal{loss} function induces a distribution $z_W$ which is a Wardrop equilibrium of the associated congestion game. To this end, we use the result from \cite{Kinderlehrer2000} showing that every local minimum $x^*$ of a function $h$ belonging to class $C^1$ and defined on a closed and convex subset $X \subseteq \mathbb{R}^n$ verifies the following variational inequality: 
$$\langle \nabla h(x^*) , x-x^* \rangle \geq 0, \;\; \forall x \in X.$$
Given $W$, the \textnormal{loss} of the network only depends on $\{b_{k,i}\}$ (see lemma above) which induces the outputs $\{\tilde{b}_k^j\}$. Then, we can define the \textnormal{loss} on the set of admissible families $B = \{b_{k,i}\}$ and verify that $B$ is convex and closed. Given the particular shape of the studied \textnormal{loss} functions, one can also verify that the \textnormal{loss} is $C^1$ on a neighborhood of $B$. Then, we get that every local minimum $b^*$ of the \textnormal{loss} function on $B$ verifies the variational inequality: 
$$\langle \nabla \text{\textnormal{loss}}(b^*) , b-b^* \rangle \geq 0, \;\; \forall b \in B.$$
We can now prove the following lemma.
\setcounter{lemma}{2}
\begin{lem}
For any $b$ and $b^*$ in $B$, the following holds: 
\begin{align*}
    \langle \nabla \text{\textnormal{loss}}(b^*) , b-b^* \rangle &= \sum_i \sum_k \sum_j x_i^j c_k^{j\prime}(\tilde{b}_k^{j*}) \tilde{b}_k^{j*} (b_{k,i} - b_{k,i}^*)\\
    &+ \sum_i \sum_k \sum_j x_i^j c_k^j(\tilde{b}_k^{j*})(b_{k,i} - b_{k,i}^*).
\end{align*}
\end{lem}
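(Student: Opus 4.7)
The plan is to carry out the computation of the gradient of $\text{loss}$ with respect to the variables $\{b_{k,i}\}$ directly, using the definition of the edge costs $c_k^j$ introduced in Lemma~\ref{lemma_1}, and then to form the inner product with $b - b^*$.

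First I would compute the partial derivatives of $\text{loss}$ with respect to each $b_{k,i}$. Since $\text{loss}(b) = \sum_j \sum_k \ell(\tilde{b}_k^j, y_k^j)$ with $\tilde{b}_k^j = \sum_{i'} x_{i'}^j b_{k,i'}$, only the terms indexed by $k$ depend on $b_{k,i}$, and the chain rule yields $\partial \tilde{b}_k^j / \partial b_{k,i} = x_i^j$. Writing $\ell'(\xi, y_k^j)$ for the derivative of $\ell$ with respect to its first argument, this gives
\begin{equation*}
\frac{\partial \text{loss}}{\partial b_{k,i}}(b) \;=\; \sum_j x_i^j \, \ell'(\tilde{b}_k^j, y_k^j).
\end{equation*}

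Second, I would rewrite $\ell'$ in terms of $c_k^j$ using the definition $c_k^j(\xi) = \ell(\xi, y_k^j)/\xi$ introduced in Lemma~\ref{lemma_1}, which is equivalent to $\ell(\xi, y_k^j) = \xi\, c_k^j(\xi)$. Differentiating both sides by the ordinary product rule gives $\ell'(\xi, y_k^j) = c_k^j(\xi) + \xi\, c_k^{j\prime}(\xi)$. Substituting this back into the expression above yields
\begin{equation*}
\frac{\partial \text{loss}}{\partial b_{k,i}}(b^*) \;=\; \sum_j x_i^j \, c_k^j(\tilde{b}_k^{j*}) \;+\; \sum_j x_i^j \, \tilde{b}_k^{j*}\, c_k^{j\prime}(\tilde{b}_k^{j*}).
\end{equation*}

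Finally, the inner product $\langle \nabla \text{loss}(b^*), b - b^* \rangle$ is by definition $\sum_k \sum_i \frac{\partial \text{loss}}{\partial b_{k,i}}(b^*)(b_{k,i} - b_{k,i}^*)$. Plugging in the two-term expression for the partial derivative and splitting the resulting double sum into its two pieces gives exactly the two displayed sums of the lemma, after reordering the summations over $i$, $k$, $j$.

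There is no real obstacle here: the claim is a purely computational identity, and the only point requiring care is the product-rule step, where one must remember that $c_k^j$ is defined via division by $\xi$, so that $\ell$ contains an extra factor of $\xi$ that produces both the $c_k^j$ term and the $\tilde{b}_k^{j*} c_k^{j\prime}$ term when differentiated. The well-definedness of $c_k^j$ and its derivative at $\tilde{b}_k^{j*}$ is guaranteed by the assumption on loss functions stated in the remark following Lemma~\ref{lemma_1} (namely that $\ell(\xi, y_k^j)/\xi$ extends continuously to $0$), together with the $C^1$ regularity of $\text{loss}$ on a neighborhood of $B$ noted earlier in the proof.
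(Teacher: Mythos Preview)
Your proposal is correct and follows essentially the same route as the paper: both arguments compute $\partial\textnormal{loss}/\partial b_{k,i}$ via the chain rule through $\tilde{b}_k^j$ and then use the product-rule identity coming from $\ell(\xi,y_k^j)=\xi\,c_k^j(\xi)$. The only cosmetic difference is that the paper substitutes $\ell=\xi c_k^j(\xi)$ \emph{before} differentiating, whereas you differentiate $\ell$ first and substitute afterward; the resulting computation and terms are identical.
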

\begin{proof}
Be $b \in B$, we have:
\begin{align*}
    \textnormal{loss}(b) &= \sum _k \sum_j \ell(\tilde{b}_k^{j}, y_{k}^j ) \\
    &=\sum _k \sum_j c_k^j(\tilde{b}_k^{j}) \tilde{b}_k^{j}
\end{align*}
with $\tilde{b}_k^{j} = \sum_i x_i^j b_{k,i}$. 
For $b \in B$ we can compute that: 
\begin{align*}
    \frac{\partial \textnormal{loss}}{\partial b_{k,i}}(b) &= \sum_j  \frac{\partial c_k^j}{\partial b_{k,i}} (\tilde{b}_k^{j}) \tilde{b}_k^{j} + c_k^j(\tilde{b}_k^{j})  \frac{\partial \tilde{b}_k^{j}}{\partial b_{k,i}} \\
    &= \sum_j c_k^{j\prime}(\tilde{b}_k^{j}) x_i^j \tilde{b}_k^{j} + c_k^j(\tilde{b}_k^{j}) x_i^j \\
    &= \sum_j c_k^{j\prime}(\tilde{b}_k^{j}) x_i^j \tilde{b}_k^{j} + \sum_j c_k^j(\tilde{b}_k^{j}) x_i^j.
\end{align*}
For $b \in B$ and $b^* \in B$, the previous calculations lead to the desired result.
\begin{align*}
    \langle \nabla \textnormal{loss}(b^*), b-b^* \rangle & = \sum_i \sum_k \frac{\partial \textnormal{loss} }{\partial b_{k,i}}(b^*) (b_{k,i} - b_{k,i}^*) \\
    &= \sum_i \sum_k \sum_j x_i^j c_k^{j\prime} (\tilde{b}_k^{j*}) \tilde{b}_k^{j*} (b_{k,i} - b_{k,i}^*) + \sum_i \sum_k \sum_j x_i^j c_k^j(\tilde{b}_k^{j*})(b_{k,i} - b_{k,i}^*).
\end{align*}
\end{proof}
On the other hand, we can characterize the Wardrop equilibrium of a non-atomic congestion game using the following variational inequality.
Let $Z$ be the set of admissible flows for $\textnormal{NCG}_N^{\textnormal{loss}}$.
\begin{lem}
A distribution $z^*$ is a Wardrop equilibrium of $\textnormal{NCG}_N^{\textnormal{loss}}$ if and only if:
$$ \sum_i \sum_k \sum_j x_i^j c_k^j(\tilde{z}_k^{j*})(z_{k,i} - z_{k,i}^*) \geq 0, \;\; \forall z \in Z.$$ 
\end{lem}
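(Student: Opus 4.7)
The plan is to reduce the variational inequality to an output-level statement, then match it directly against the definition of a Wardrop equilibrium. The key observation is that in $\textnormal{NCG}_N^{\textnormal{loss}}$ the edges in $B\cup J$ have zero cost, so the cost of any path $S\in S_i$ depends only on which output node $k$ it traverses. More precisely, every $S\in S_i\cap p_k$ has identical cost
$$c_S(z) \;=\; \sum_j a_{S,e_k^j}\, c_k^j(\tilde z_k^j) \;=\; \sum_j x_i^j\, c_k^j(\tilde z_k^j) \;=:\; C_k^i(z).$$
Using the aggregated measures $z_{k,i}=\sum_{S\in S_i\cap p_k} z_S$, the left-hand side of the inequality becomes
$$\sum_i \sum_k \sum_j x_i^j c_k^j(\tilde z_k^{j*})(z_{k,i}-z_{k,i}^*) \;=\; \sum_i\sum_k C_k^i(z^*)\,(z_{k,i}-z_{k,i}^*),$$
so the problem is to show that this quantity is nonnegative for all admissible $z$ precisely when $z^*$ is a Wardrop equilibrium.

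For the forward direction, I would suppose $z^*$ is a Wardrop equilibrium. Since every path in $S_i$ passes through exactly one output, we have $\sum_k z_{k,i}=n_i$ for any admissible $z$, and the Wardrop property at $z^*$ says that there is a constant $\gamma_i$ with $C_k^i(z^*)=\gamma_i$ whenever $z_{k,i}^*>0$ and $C_k^i(z^*)\ge\gamma_i$ otherwise. Then I would compute
$$\sum_i\sum_k C_k^i(z^*) z_{k,i}^* \;=\; \sum_i \gamma_i \sum_k z_{k,i}^* \;=\; \sum_i \gamma_i n_i,\qquad \sum_i\sum_k C_k^i(z^*) z_{k,i} \;\ge\; \sum_i \gamma_i \sum_k z_{k,i} \;=\; \sum_i \gamma_i n_i,$$
and subtract to obtain the inequality.

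For the converse I would argue by contrapositive. If the Wardrop condition fails, there is a type $i_0$ and outputs $k_1,k_2$ with $z_{k_1,i_0}^*>0$ and $C_{k_1}^{i_0}(z^*)>C_{k_2}^{i_0}(z^*)$. I then build a test distribution $z$ from $z^*$ by moving an $\varepsilon\in(0,z_{k_1,i_0}^*]$ mass of type-$i_0$ players from a path through $k_1$ to a path through $k_2$, leaving everything else unchanged; this keeps $\sum_k z_{k,i_0}=n_{i_0}$ and all entries nonnegative, hence $z\in Z$. For this perturbation
$$\sum_i\sum_k C_k^i(z^*)(z_{k,i}-z_{k,i}^*) \;=\; \varepsilon\bigl(C_{k_2}^{i_0}(z^*)-C_{k_1}^{i_0}(z^*)\bigr) \;<\; 0,$$
contradicting the variational inequality.

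The only real subtlety is not in the inequality manipulations but in the setup: I need to make sure that aggregating path-level flows $z_S$ into output-level flows $z_{k,i}$ is lossless, \emph{i.e.}\ that the inequality and the Wardrop condition really only see $z$ through the coordinates $(z_{k,i})$. This holds because (i) every path ends in exactly one output node, so $\sum_k z_{k,i}=n_i$ translates the population-size constraint, and (ii) $c_S(z^*)$ is constant across $S\in S_i\cap p_k$, so the Wardrop inequalities at the strategy level are equivalent to those at the output level. Once this reduction is in place, both directions follow by short, direct computations with no further technical hurdle.
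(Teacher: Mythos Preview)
Your proposal is correct and follows essentially the same route as the paper: both directions use the reduction to output-level costs $C_k^i(z^*)=\sum_j x_i^j c_k^j(\tilde z_k^{j*})$, the forward direction invokes the equilibrium constant $\gamma_i$ (the paper's $c_i(z^*)$) together with $\sum_k z_{k,i}=n_i$, and the converse constructs a flow shift from a costlier output $k_1$ to a cheaper $k_2$ to force the sum negative. The only cosmetic differences are that the paper moves the full mass $z^*_{S_1}$ rather than an $\varepsilon$, and it phrases the converse at the strategy level before deducing $k\neq k'$; your explicit remark on why the path-to-output aggregation is lossless is a welcome clarification that the paper leaves implicit.
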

\begin{proof}
($\longrightarrow$) In the following, we will use \cite[Proposition 2.7]{nacg_roughgarden}. It states that if $z$ is a Wardrop equilibrium, then for each player type $i$ there is a real number $c_i(z)$ such that all strategies $S \in S_i$ with $z_S >0 $ verify $c_S(z) = c_i(z)$. Then, the social cost of $z$ is: 
$$\textnormal{SC}(z) = \sum_{i=1}^d c_i(z) n_i.$$
We can add that if $z$ is a Wardrop equilibrium, then for each player type $i$ and strategy $S \in S_i$, if $z_S = 0$ then $c_S(z) \geq c_i(z)$. This addition is immediate from the definition of the Wardrop equilibrium because it would be in the interest of the players of type $i$ to use the strategy $S$ otherwise.

We will now apply this proposition to our case. Let $z^*$ be a Wardrop equilibrum of $\text{NCG}_N^{\textnormal{loss}}$. 
For each player type $i$, there exists a number $c_i(z^*)$ such that for all $k$, if $z^*_{k,i} > 0$ then $$\forall S \in S_i \cap p_k,  c_S(z^*) = \sum_j x_i^j c_k^j(\tilde{z}_k^{j*}) = c_i(z^*).$$ 
Moreover, if  $z^*_{k,i} = 0$ then for all $S \in S_i \cap p_k$, we have $$c_S(z^*) = \sum_j x_i^j c_k^j(\tilde{z}_k^{j*}) \geq c_i(z^*)$$
and in general
$$\sum_j x_i^j c_k^j(\tilde{z}_k^{j*}) \geq c_i(z^*).$$
We can compute that: 
\begin{align*}
    \sum_i \sum_k \sum_j x_i^j c_k^j(\tilde{z}_k^{j*})(z_{k,i} - z_{k,i}^*)
    &= \sum_i \sum_k \sum_j x_i^j c_k^j(\tilde{z}_k^{j*}) z_{k,i} - \sum_i \sum_k \sum_j x_i^j c_k^j(\tilde{z}_k^{j*}) z_{k,i}^* \\
    &= \sum_i \sum_k z_{k,i} \sum_j x_i^j c_k^j(\tilde{z}_k^{j*}) - \sum_k \sum_j c_k^j(\tilde{z}_k^{j*}) \sum_i x_i^j  z_{k,i}^* \\
    &= \sum_i \sum_k z_{k,i} \sum_j x_i^j c_k^j(\tilde{z}_k^{j*}) - \sum_k \sum_j c_k^j(\tilde{z}_k^{j*}) \tilde{z}_k^{j*} \\
    &= \sum_i \sum_k z_{k,i} \sum_j x_i^j c_k^j(\tilde{z}_k^{j*}) - \textnormal{SC}(z^*) \\
    &= \sum_i \sum_k z_{k,i} \sum_j x_i^j c_k^j(\tilde{z}_k^{j*})  - \sum_i c_i(z^*) n_i \\
    & \geq \sum_i \sum_k z_{k,i} c_i(z^*) - \sum_i c_i(z^*) n_i \\
    & \geq \sum_i c_i(z^*) \sum_k z_{k,i} - \sum_i c_i(z^*) n_i \\
    &= \sum_i c_i(z^*) n_i - \sum_i c_i(z^*) n_i \\
    &= 0.
\end{align*}
($\longleftarrow$) Let $z^*$ be a distribution that verifies
$$  \sum_i \sum_k \sum_j x_i^j c_k^j(\tilde{z}_k^{j*})(z_{k,i} - z_{k,i}^*) \geq 0 \;\;, \forall z \in Z.$$ 
By contradiction, let us suppose that $z^*$ is not a Wardrop equilibrium. Then by definition, there exists a player type $i$ and two strategies $S_1 \in S_i$, $S_2 \in S_i$ such that $z^*_{S_1} >0$ and $c_{S_1}(z^*) > c_{S_2}(z^*)$. We construct the distribution $z$ such that $z_S = z_S^*$ for all $S$ such that $S \neq S_1$ and $S \neq S_2$. We impose $z_{S_1} = 0$ and  $z_{S_2} = z^*_{S_1} + z^*_{S_2}$. One can verify that $z$ is an acceptable distribution. Let us suppose that $S_1 \in p_k$ and $S_2 \in p_{k'}$. Then, we have $k \neq k'$ because $c_{S_1}(z^*) > c_{S_2}(z^*)$ (otherwise we would have $c_{S_1}(z^*) = c_{S_2}(z^*)$). We also have that $z_{k',i} = z^*_{k',i} + z^*_{S_1}$ while $z_{k,i} = z^*_{k,i} - z^*_{S_1}$. Furthermore, it holds that $c_{S_1}(z^*) = \sum_j x_i^j c_k^j(\tilde{z}_k^{j*}) > c_{S_2}(z^*) = \sum_j x_i^j c_{k'}^j(\tilde{z}_{k'}^{j*})$. As $i''\neq i$ or $k'' \neq k \neq k'$, we have $z_{k'',i''} - z_{k'',i''}^* = 0$. We can compute that: 
\begin{align*}
    \sum_i \sum_k \sum_j x_i^j c_k^j(\tilde{z}_k^{j*})(z_{k,i} - z_{k,i}^*) 
    &= \sum_j x_i^j c_{k'}^j(\tilde{z}_{k'}^{j*})(z_{k',i} - z_{k',i}^*) + \sum_j x_i^j c_k^j(\tilde{z}_k^{j*})(z_{k,i} - z_{k,i}^*) \\
    &= \sum_j x_i^j c_{k'}^j(\tilde{z}_{k'}^{j*})z^*_{S_1} + \sum_j x_i^j c_k^j(\tilde{z}_k^{j*})(-z^*_{S_1}) \\
    &= c_{S_2}(z^*) z^*_{S_1}  - c_{S_1}(z^*) z^*_{S_1} \\
    &= z^*_{S_1} (c_{S_2}(z^*) - c_{S_1}(z^*))\\
    &< 0
\end{align*}
that leads to a contradiction. 
\end{proof}
Let us remind that we consider \textnormal{loss} functions of the form $\text{\textnormal{loss}}(W) = \sum_j \sum_k \ell(\xi,y_k^j)$ where $\ell(\xi,y_k^j) = A_k^j \xi^{\beta}$ with $A_k^j > 0$, ${\beta \geq 2}$.
For such \textnormal{loss} functions, we can establish the following result.
\begin{lem}
Let $\text{\textnormal{loss}}(W) = \sum_j \sum_k \ell(\xi,y_k^j)$ where $\ell(\xi,y_k^j) = A_k^j \xi^{\beta}$ with $A_k^j > 0$, ${\beta \geq 2}$ and $b^* \in B$. Then,
$$\forall b \in B,\  \langle \nabla \text{\textnormal{loss}}(b^*), b-b^* \rangle \geq 0 \text{ if and only if }
    \forall z \in Z,\ \sum_i \sum_k \sum_j x_i^j c_k^j(\tilde{b}_k^{j*})(z_{k,i} - b_{k,i}^*) \geq 0.$$
\end{lem}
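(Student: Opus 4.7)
My plan is to reduce the claim to an algebraic identity that collapses the two terms in Lemma 3 into a single one, using the polynomial form of $\ell$, and then invoke Lemma 2 to pass between $b \in B$ and $z \in Z$.

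First I would specialize the cost function. Since $\ell(\xi,y_k^j)=A_k^j\xi^\beta$, the definition $c_k^j(\xi)=\ell(\xi,y_k^j)/\xi$ gives $c_k^j(\xi)=A_k^j\xi^{\beta-1}$, whence
\[
c_k^{j\prime}(\xi)\,\xi \;=\; A_k^j(\beta-1)\xi^{\beta-1} \;=\; (\beta-1)\,c_k^j(\xi).
\]
Plugging this identity into the expression for $\langle\nabla\text{loss}(b^*),b-b^*\rangle$ provided by Lemma 3 yields, after collecting the two sums,
\[
\langle \nabla \text{loss}(b^*),\, b-b^* \rangle \;=\; \beta \sum_i\sum_k\sum_j x_i^j\, c_k^j(\tilde b_k^{j*})\,(b_{k,i}-b_{k,i}^*).
\]
Because $\beta\geq 2>0$, the sign of the inner product is determined by the sign of the trailing sum. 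In particular, for a fixed $b^*\in B$, the inequality $\langle\nabla\text{loss}(b^*),b-b^*\rangle\geq 0$ holds for all $b\in B$ if and only if $\sum_i\sum_k\sum_j x_i^j\,c_k^j(\tilde b_k^{j*})\,(b_{k,i}-b_{k,i}^*)\geq 0$ for all $b\in B$.

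Next I would argue that quantifying the trailing sum over $b \in B$ is the same as quantifying it over $z \in Z$ with $z_{k,i}$ in place of $b_{k,i}$. By Lemma 2, every admissible weight family $W$ (equivalently every $b\in B$) induces a valid action distribution $z_W\in Z$ with $z_{k,i}=b_{k,i}$, and conversely every $z\in Z$ yields a weight configuration with matching aggregates $b_{k,i}=z_{k,i}$; so the map $b\mapsto z$ defined by $z_{k,i}=b_{k,i}$ is a surjection $B\twoheadrightarrow\{(z_{k,i})_{k,i}:z\in Z\}$ and vice versa. Substituting $z_{k,i}$ for $b_{k,i}$ therefore delivers exactly the variational inequality of the right-hand side of the lemma, completing both directions.

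The only nontrivial step is the polynomial identity that turns two distinct terms (one with $c'$, one with $c$) into a single scalar multiple of the second. It works precisely because $c$ is a power of $\xi$, which is why the statement is restricted to $\ell(\xi,y_k^j)=A_k^j\xi^\beta$; if $\ell$ were a general convex loss one could not eliminate the $c'$ term in this way. Once this identification is made, the equivalence of the two variational inequalities becomes immediate given Lemma 2.
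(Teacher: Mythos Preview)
Your proposal is correct and follows essentially the same route as the paper: both compute $c_k^{j\prime}(\xi)\xi=(\beta-1)c_k^j(\xi)$ from the polynomial form of $\ell$, substitute into the two-term expression of the preceding lemma to obtain $\langle\nabla\text{loss}(b^*),b-b^*\rangle=\beta\sum_{i,k,j}x_i^j c_k^j(\tilde b_k^{j*})(b_{k,i}-b_{k,i}^*)$, and then use the identification of $B$ with $Z$ (the paper states $B=Z$ outright, you justify it via Lemma 2) to swap the quantifier.
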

\begin{proof}
We start by showing that the considered \textnormal{loss} functions ensure that the cost functions of the associated congestion game are solutions of the differential equation: 
$$(\beta -1) c_k^j(t) = c_k^{j\prime} (t) t.$$
In fact, the solutions of the equation $(\beta -1) U(\xi) =U'(\xi) \xi$ which verify $U(0)=0$ are functions $U$ such that $U(\xi) = A_U \xi^{(\beta -1)}$ with ${\beta \geq 2}$. We restrict the set of solutions to functions $U$ such that $A_U>0$ so that $U$ is increasing. Then we have, for all $k$ and $j$, $c_k^j(\xi) = A_k^j \xi^{\beta -1} $ with $A_k^j> 0$, ${\beta \geq 2}$. The fact that $\ell(\xi, y_k^j) = c_k^j(\xi) \xi$ imposes that $\ell$ has the form: $\ell(\xi,y_k^j) = A_k^j \xi^{\beta}$ with $A_k^j> 0$, ${\beta \geq 2}$. 
The \textnormal{loss} function $\textnormal{loss}(W) = \sum_j \sum_k \ell(\xi,y_k^j)$ where $\ell(\xi,y_k^j) = A_k^j \xi^{\beta}$ with $A_k^j > 0$ respects all the conditions, \ie, $\frac{\ell}{x}$ is increasing, positive and continuous. It also allows the cost functions of the associated congestion game to verify the differential equation written above. 
For this type of \textnormal{loss}, we can rewrite the condition: 
\begin{align*}
    \langle \nabla \textnormal{loss}(b^*) , b-b^* \rangle &= \sum_i \sum_k \sum_j x_i^j c_k^{j\prime} (\tilde{b}_k^{j*}) \tilde{b}_k^{j*} (b_{k,i} - b_{k,i}^*) + \sum_i \sum_k \sum_j x_i^j c_k^j(\tilde{b}_k^{j*})(b_{k,i} - b_{k,i}^*) \\
    &= \sum_i \sum_k \sum_j x_i^j (\beta -1) c_k^j(\tilde{b}_k^{j*}) (b_{k,i} - b_{k,i}^*) + \sum_i \sum_k \sum_j x_i^j c_k^j(\tilde{b}_k^{j*})(b_{k,i} - b_{k,i}^*) \\
    &= \beta \sum_i \sum_k \sum_j x_i^j  c_k^j(\tilde{b}_k^{j*})(b_{k,i} - b_{k,i}^*).
\end{align*}
Then, using the fact that $B=Z$, we have:
$$\forall b \in B,\  \langle \nabla \textnormal{loss}(b^*), b-b^* \rangle \geq 0 \iff  \forall z \in Z,\ \sum_i \sum_k \sum_j x_i^j c_k^j(\tilde{b}_k^{j*})(z_{k,i} - b_{k,i}^*) \geq 0.$$ 
\end{proof}
The proof of the theorem is now straightforward. If $W$ is such that the family $b^*$ induced by $W$ is a local minimum of the \textnormal{loss} function $\text{\textnormal{loss}}(W)$ of $N$, then 
$$ \langle \nabla \text{\textnormal{loss}}(b^*) , b-b^* \rangle \geq 0, \quad \forall b \in B $$ 
and  
$$\sum_i \sum_k \sum_j x_i^j c_k^j(\tilde{b}_k^{j*})(z_{k,i} - b_{k,i}^*) \geq 0, \quad \forall z \in Z$$ which implies that
$z_{W}$, the flow associated to $W$ such that $z^*_{k,i} = b^*_{k,i}$, is a Wardrop equilibrium of the associated congestion game. This concludes the proof of the main theorem.
\end{proof}

\section*{Proof of Corollary \ref{corollary:binary}}
\begin{proof}
For a set of weights $W$, we can express the squared \textnormal{loss} denoted by $S\textnormal{loss}$ as follows:
$$S\text{\textnormal{loss}}(W) = \sum_j \left(\sum_{k \ne \mathsf{e}_j}  \left(\tilde{b}_k^j\right)^2    +   \left(1-\tilde{b}_{\mathsf{e}_j}^j\right)^2\right) $$
while for $\textnormal{loss}(\cdot)$ we have:
$$\text{\textnormal{loss}}(W) = \sum_j \sum_{k \ne \mathsf{e}_j}  (\tilde{b}_k^j)^2.$$ 
One can see that:
$$S\text{\textnormal{loss}}(W) = \text{\textnormal{loss}}(W) +  const,$$
where $const = \sum_j  (1-\tilde{b}_{\mathsf{e}_j}^j)^2$. 
The squared \textnormal{loss} penalizes the outputs equal to 1. Let us rewrite $const$ recalling that $\sum_i  x_i^j  = 1$ and $\sum_k z_{k,i} = 1$.
\begin{align*}
\tilde{b}_{\mathsf{e}_j}^j &= \sum_i x_i^j b_{\mathsf{e}_j,i} = \sum_i x_i^j \left(1 -\sum_{k \ne \mathsf{e}_j} b_{k,i} \right) \\
			&	= \sum_i  x_i^j -  \sum_i  x_i^j \sum_{k \ne \mathsf{e}_j} b_{k,i} = 1 - \sum_{k \ne \mathsf{e}_j} \sum_i b_{k,i} x_i^j \\
			&	= 1 - \sum_{k \ne \mathsf{e}_j} \tilde{b}_k^j.
\end{align*}
Then,
\begin{align*}
 const &= \sum_j  (1-\tilde{b}_{\mathsf{e}_j}^j)^2 \\
 &= \sum_j  \left(1-\left(1 - \sum_{k \ne \mathsf{e}_j} \tilde{b}_k^j \right)\right)^2 \\
 &= \sum_j \left(\sum_{k \ne \mathsf{e}_j} \tilde{b}_k^j \right)^2.
\end{align*} 
Finally, we have that 
$$S\text{\textnormal{loss}}(W) = \text{\textnormal{loss}}(W) +  \sum_j \left(\sum_{k \ne \mathsf{e}_j} \tilde{b}_k^j \right)^2.$$ 
In a general case of classification with a number of classes $C\geq3$, nothing guarantees that $S\text{\textnormal{loss}}(\cdot)$ and $\text{\textnormal{loss}}(\cdot)$ have the same local minima. However, in the case of binary classification with $C=2$, we have that:
\begin{align*}
    const &= \sum_j \left(\sum_{k \ne \mathsf{e}_j} \tilde{b}_k^j\right)^2 = \sum_j \sum_{k \ne \mathsf{e}_j}  (\tilde{b}_k^j)^2 =\text{\textnormal{loss}}(W).
\end{align*}
This result is obtained due to the fact that the set $\{k \ne \mathsf{e}_j\}$ has an only element ($1 \leq k \leq C =2 $). This implies that $S\text{\textnormal{loss}}(W) = 2\text{\textnormal{loss}}(W)$ and we have that $S\textnormal{loss}$ and $\textnormal{loss}$ have the same local minima. 
\end{proof}

\section*{Proof of Lemma \ref{lemma_2_nonlinear}}
\begin{proof}
Under the assumptions of Lemma 3, the \textnormal{loss} of the non-linear DNN can be written as: 
$$\textnormal{loss}(W) = \sum_j \sum_k \ell(\mathbb{E}(o_k^j),y_k^j).$$
We further have that
$$\mathbb{E}(o_k^j) = \rho \sum_{p \in p_k} x^j_p  w_p = \rho \tilde{b}_k^{j}.$$
Then,
\begin{align*}
\textnormal{loss}(W) &= \sum_j \sum_k \ell(\rho \,  \tilde{b}_k^{j},y_k^j) \\
&= \sum_j \sum_k A_k^j (\rho \,  \tilde{b}_k^{j})^{\beta} \\
&= \rho^{\beta} \sum_j \sum_k A_k^j (\tilde{b}_k^{j})^{\beta} \\
&= \rho^{\beta} \textnormal{loss}'(W),
\end{align*}
where $\textnormal{loss}'(W)$ is the \textnormal{loss} of the linear network associated to $N$. It follows that $\textnormal{loss}$ and $\textnormal{loss}'$ have the same local minimums.
\end{proof}

\newpage
\bibliography{_jair}

\begin{thebibliography}{}

\bibitem[Allen{-}Zhu et~al., 2019]{Allen-ZhuLS19}
Allen{-}Zhu, Z., Li, Y., and Song, Z. (2019).
\newblock A convergence theory for deep learning via over-parameterization.
\newblock In {\em {ICML}}, pages 242--252.

\bibitem[Baldi and Hornik, 1989]{baldi89}
Baldi, P. and Hornik, K. (1989).
\newblock Neural networks and principal component analysis: Learning from
  examples without local minima.
\newblock {\em Neural Netw.}, 2(1):53–58.

\bibitem[Balduzzi, 2016]{balduzzi2016deep}
Balduzzi, D. (2016).
\newblock Deep online convex optimization with gated games.

\bibitem[Blanchet and Carlier, 2016]{BlanchetC16}
Blanchet, A. and Carlier, G. (2016).
\newblock Optimal transport and cournot-nash equilibria.
\newblock {\em Math. Oper. Res.}, 41(1):125--145.

\bibitem[Blum and Rivest, 1992]{Blum1992}
Blum, A.~L. and Rivest, R.~L. (1992).
\newblock Training a 3-node neural network is np-complete.
\newblock {\em Neural Networks}, 5(1):117 -- 127.

\bibitem[Br\"{u}ckner and Scheffer, 2011]{Bruckner2011}
Br\"{u}ckner, M. and Scheffer, T. (2011).
\newblock Stackelberg games for adversarial prediction problems.
\newblock In {\em ACM SIGKDD}, pages 547--555.

\bibitem[Carlier et~al., 2008]{carlier08}
Carlier, G., Jimenez, C., and Santambrogio, F. (2008).
\newblock Optimal transportation with traffic congestion and wardrop
  equilibria.
\newblock {\em SIAM J. Control Optim.}, 47(3):1330–1350.

\bibitem[Chizat and Bach, 2018]{ChizatB18}
Chizat, L. and Bach, F. (2018).
\newblock On the global convergence of gradient descent for over-parameterized
  models using optimal transport.
\newblock In {\em NeurIPS}, pages 3040--3050.

\bibitem[Choromanska et~al., 2014]{choromanska2014loss}
Choromanska, A., Henaff, M., Mathieu, M., Arous, G.~B., and LeCun, Y. (2014).
\newblock The loss surfaces of multilayer networks.

\bibitem[Claus and Boutilier, 1998]{reinforc_3}
Claus, C. and Boutilier, C. (1998).
\newblock The dynamics of reinforcement learning in cooperative multiagent
  systems.
\newblock In {\em AAAI}, pages 746--752.

\bibitem[Colini{-}Baldeschi et~al., 2017]{Colini-Baldeschi17}
Colini{-}Baldeschi, R., Cominetti, R., Mertikopoulos, P., and Scarsini, M.
  (2017).
\newblock The asymptotic behavior of the price of anarchy.
\newblock In {\em {WINE}}, pages 133--145.

\bibitem[Dauphin et~al., 2014]{dauphin_14}
Dauphin, Y.~N., Pascanu, R., Gulcehre, C., Cho, K., Ganguli, S., and Bengio, Y.
  (2014).
\newblock Identifying and attacking the saddle point problem in
  high-dimensional non-convex optimization.
\newblock In {\em NIPS}, page 2933–2941.

\bibitem[Dritsoula et~al., 2017]{advers_2}
Dritsoula, L., Loiseau, P., and Musacchio, J. (2017).
\newblock A game-theoretic analysis of adversarial classification.
\newblock {\em IEEE Transactions on Information Forensics and Security},
  12(12):3094--3109.

\bibitem[Du et~al., 2017]{DuJLJSP17}
Du, S.~S., Jin, C., Lee, J.~D., Jordan, M.~I., Singh, A., and P{\'{o}}czos, B.
  (2017).
\newblock Gradient descent can take exponential time to escape saddle points.
\newblock In {\em NIPS}, pages 1067--1077.

\bibitem[Freeman and Bruna, 2017]{FreemanB17}
Freeman, C.~D. and Bruna, J. (2017).
\newblock Topology and geometry of half-rectified network optimization.
\newblock In {\em {ICLR}}.

\bibitem[Freund and Schapire, 1996]{freund96}
Freund, Y. and Schapire, R.~E. (1996).
\newblock Game theory, on-line prediction and boosting.
\newblock In {\em COLT}, page 325–332.

\bibitem[Gautier et~al., 2016]{gautier2016globally}
Gautier, A., Nguyen, Q., and Hein, M. (2016).
\newblock Globally optimal training of generalized polynomial neural networks
  with nonlinear spectral methods.

\bibitem[Ge et~al., 2015]{GeHJY15}
Ge, R., Huang, F., Jin, C., and Yuan, Y. (2015).
\newblock Escaping from saddle points - online stochastic gradient for tensor
  decomposition.
\newblock In {\em {COLT}}, pages 797--842.

\bibitem[Goodfellow et~al., 2016]{Goodfellow-et-al-2016}
Goodfellow, I., Bengio, Y., and Courville, A. (2016).
\newblock {\em Deep Learning}.
\newblock MIT Press.
\newblock \url{http://www.deeplearningbook.org}.

\bibitem[Goodfellow and Vinyals, 2015]{GoodfellowV14}
Goodfellow, I.~J. and Vinyals, O. (2015).
\newblock Qualitatively characterizing neural network optimization problems.
\newblock In {\em {ICLR}}.

\bibitem[Hardt and Ma, 2017]{HardtM17}
Hardt, M. and Ma, T. (2017).
\newblock Identity matters in deep learning.
\newblock In {\em ICLR}.

\bibitem[Hu and Wellman, 2003]{reinforc_2}
Hu, J. and Wellman, M.~P. (2003).
\newblock Nash q-learning for general-sum stochastic games.
\newblock {\em Journal of Machine Learning Research}, 4:1039--1069.

\bibitem[Jin et~al., 2017]{Jin0NKJ17}
Jin, C., Ge, R., Netrapalli, P., Kakade, S.~M., and Jordan, M.~I. (2017).
\newblock How to escape saddle points efficiently.
\newblock In {\em {ICML}}, pages 1724--1732.

\bibitem[Kawaguchi, 2016]{kawaguchi2016deep}
Kawaguchi, K. (2016).
\newblock Deep learning without poor local minima.

\bibitem[Kinderlehrer and Stampacchia, 2000]{Kinderlehrer2000}
Kinderlehrer, D. and Stampacchia, G. (2000).
\newblock {\em An Introduction to Variational Inequalities and Their
  Applications}.
\newblock SIAM.

\bibitem[Li et~al., 2017]{article}
Li, Y., Jia, Y., Tan, H., Wang, R., Han, Z., and Lau, F. (2017).
\newblock Congestion game with agent and resource failures.
\newblock {\em IEEE Journal on Selected Areas in Communications}, PP:1--1.

\bibitem[Liu and Chawla, 2009]{advers_1}
Liu, W. and Chawla, S. (2009).
\newblock A game theoretical model for adversarial learning.
\newblock In {\em ICDM Workshops}, pages 25--30.

\bibitem[Mei et~al., 2018]{mei18}
Mei, S., Montanari, A., and Nguyen, P. (2018).
\newblock A mean field view of the landscape of two-layers neural networks.
\newblock {\em CoRR}, abs/1804.06561.

\bibitem[Nguyen and Hein, 2017]{NguyenH17}
Nguyen, Q. and Hein, M. (2017).
\newblock The loss surface of deep and wide neural networks.
\newblock In {\em {ICML}}, pages 2603--2612.

\bibitem[Nguyen et~al., 2018]{NguyenM018}
Nguyen, Q., Mukkamala, M.~C., and Hein, M. (2018).
\newblock Neural networks should be wide enough to learn disconnected decision
  regions.
\newblock In {\em {ICML}}, pages 3737--3746.

\bibitem[Peshkin et~al., 2000]{reinforc_1}
Peshkin, L., Kim, K.-E., Meuleau, N., and Kaelbling, L.~P. (2000).
\newblock Learning to cooperate via policy search.
\newblock In {\em UAI}, pages 489--496.

\bibitem[Ronneberger et~al., 2015]{RonnebergerFB15}
Ronneberger, O., Fischer, P., and Brox, T. (2015).
\newblock U-net: Convolutional networks for biomedical image segmentation.
\newblock In {\em {MICCAI}}, pages 234--241.

\bibitem[Rosenthal, 1973]{Rosenthal1973}
Rosenthal, R.~W. (1973).
\newblock A class of games possessing pure-strategy nash equilibria.
\newblock {\em International Journal of Game Theory}, 2:65--67.

\bibitem[Rotskoff and Vanden{-}Eijnden, 2018]{rostkoff}
Rotskoff, G.~M. and Vanden{-}Eijnden, E. (2018).
\newblock Neural networks as interacting particle systems: Asymptotic convexity
  of the loss landscape and universal scaling of the approximation error.
\newblock {\em CoRR}, abs/1805.00915.

\bibitem[Roughgarden, 2015]{robustness}
Roughgarden, T. (2015).
\newblock Intrinsic robustness of the price of anarchy.
\newblock {\em J. ACM}, 62(5).

\bibitem[Roughgarden and Schoppmann, 2011]{rough_local}
Roughgarden, T. and Schoppmann, F. (2011).
\newblock Local smoothness and the price of anarchy in atomic splittable
  congestion games.
\newblock In {\em SODA}, SODA '11, page 255–267.

\bibitem[Roughgarden and Éva Tardos, 2004]{nacg_roughgarden}
Roughgarden, T. and Éva Tardos (2004).
\newblock Bounding the inefficiency of equilibria in nonatomic congestion
  games.
\newblock {\em Games and Economic Behavior}, 47(2):389 -- 403.

\bibitem[Safran and Shamir, 2018]{SafranS18}
Safran, I. and Shamir, O. (2018).
\newblock Spurious local minima are common in two-layer relu neural networks.
\newblock In {\em {ICML}}, pages 4430--4438.

\bibitem[Salimans and Kingma, 2016]{salimans2016}
Salimans, T. and Kingma, D.~P. (2016).
\newblock Weight normalization: A simple reparameterization to accelerate
  training of deep neural networks.
\newblock In {\em NIPS}, pages 901--909.

\bibitem[{Schmeidler}, 1973]{schmeidler1973}
{Schmeidler}, D. (1973).
\newblock {Equilibrium points of nonatomic games}.
\newblock {\em Journal of Statistical Physics}, 7(4):295--300.

\bibitem[Schuurmans and Zinkevich, 2016]{SchuurmansZ16}
Schuurmans, D. and Zinkevich, M. (2016).
\newblock Deep learning games.
\newblock In {\em NIPS}, pages 1678--1686.

\bibitem[Soudry and Hoffer, 2018]{SoudryH18}
Soudry, D. and Hoffer, E. (2018).
\newblock Exponentially vanishing sub-optimal local minima in multilayer neural
  networks.
\newblock In {\em {ICLR}, Workshop Track}.

\bibitem[Villani, 2009]{villani}
Villani, C. (2009).
\newblock {\em Optimal transport: old and new}.
\newblock Springer, Berlin.

\end{thebibliography}

\end{document}